\documentclass{article}
\usepackage[utf8]{inputenc}
\usepackage[a4paper, total={6in, 8in}]{geometry}
\usepackage{amsthm}
\usepackage{graphicx} 
\usepackage{array} 

\usepackage{amsmath, amssymb, amsfonts, verbatim}
\usepackage{hyphenat,epsfig,subfigure,multirow}
\usepackage{nicefrac}
\usepackage{caption}
\usepackage{array}
\newcolumntype{P}[1]{>{\centering\arraybackslash}p{#1}}

\usepackage{enumitem} 
\setlist[enumerate]{wide = 0pt, leftmargin=*}

\usepackage[usenames,dvipsnames]{xcolor}

\usepackage[bottom]{footmisc}
\usepackage{tcolorbox}
\tcbuselibrary{skins,breakable}
\tcbset{enhanced jigsaw}

\usepackage[normalem]{ulem}
\usepackage[compact]{titlesec}

\definecolor{DarkRed}{rgb}{0.5,0.1,0.1}
\definecolor{RURed}{rgb}{0.8,0.1,0.1}
\definecolor{DarkBlue}{rgb}{0.1,0.1,0.5}

\usepackage{nameref}
\definecolor{ForestGreen}{rgb}{0.1333,0.5451,0.1333}
\definecolor{Red}{rgb}{0.9,0,0}
\usepackage[linktocpage=true,
	pagebackref=true,colorlinks,
	linkcolor=DarkRed,citecolor=ForestGreen,
	bookmarks,bookmarksopen,bookmarksnumbered]
	{hyperref}
\usepackage[noabbrev,nameinlink]{cleveref}
\crefname{property}{property}{Property}
\creflabelformat{property}{(#1)#2#3}
\crefname{equation}{eq}{Eq}
\creflabelformat{equation}{(#1)#2#3}

\usepackage{bm}
\usepackage{url}
\usepackage{xspace}
\usepackage[mathscr]{euscript}
\usepackage{placeins}

\usepackage{tikz}
\usetikzlibrary{arrows}
\usetikzlibrary{arrows.meta}
\usetikzlibrary{shapes}
\usetikzlibrary{backgrounds}
\usetikzlibrary{positioning}
\usetikzlibrary{decorations.markings}
\usetikzlibrary{patterns}
\usetikzlibrary{calc}
\usetikzlibrary{fit}

\newtheorem{theorem}{Theorem}
\newtheorem{lemma}{Lemma}[section]
\newtheorem{proposition}[lemma]{Proposition}

\newtheorem*{claim*}{Claim}
\newtheorem*{proposition*}{Proposition}
\newtheorem*{lemma*}{Lemma}
\newtheorem*{corollary*}{Corollary}
\newtheorem*{remark*}{Remark}

\theoremstyle{definition}

\newtheorem*{problem*}{Problem}
\newtheorem{remark}{Remark}

\newtheorem{mdalg}{Algorithm}


\newcommand{\eps}{\ensuremath{\varepsilon}}

\newcommand{\bracket}[1]{\left[#1\right]}
\newcommand{\paren}[1]{\ensuremath{\left(#1\right)}\xspace}
\newcommand{\card}[1]{\left\vert{#1}\right\vert}

\newcommand{\ceil}[1]{{\left\lceil{#1}\right\rceil}}

\newcommand{\expect}[1]{\Exp\bracket{#1}}

\newcommand{\set}[1]{\ensuremath{\left\{ #1 \right\}}}
\newcommand{\poly}{\mbox{\rm poly}}

\newcommand{\ALG}{\ensuremath{\textnormal{\sf ALG}}\xspace}
\newcommand{\alg}{\ensuremath{\mathcal{A}}\xspace}

\DeclareMathOperator*{\Exp}{\ensuremath{{\mathbb{E}}}}
\DeclareMathOperator*{\Prob}{\ensuremath{\textnormal{Pr}}}
\renewcommand{\Pr}{\Prob}

\newenvironment{tbox}{\begin{tcolorbox}[
		enlarge top by=5pt,
		enlarge bottom by=5pt,
		 breakable,
		 boxsep=0pt,
                  left=4pt,
                  right=4pt,
                  top=10pt,
                  arc=0pt,
                  boxrule=1pt,toprule=1pt,
                  colback=white
                  ]
	}
{\end{tcolorbox}}

\newcommand{\event}{\ensuremath{\mathcal{E}}}


\newcommand{\istar}{\ensuremath{i^{\star}}}

\newcommand{\muhat}{\widehat{\mu}}

\newcommand{\algarm}{\ensuremath{\textnormal{\textsc{Game-of-Arms}}}\xspace}

\newcommand{\hardstoredist}{\ensuremath{\texttt{DIST}}}

\newcommand{\arm}{\ensuremath{\textnormal{\textsf{arm}}\xspace}}
\newcommand{\armstar}{\ensuremath{\arm^{*}}\xspace}

\newcommand{\mustar}{\ensuremath{\mu^{*}}\xspace}

\newcommand{\logstar}{\ensuremath{\log^{*}}\xspace}

\newcommand{\evefindeps}[1]{\ensuremath{\mathcal{E}_{#1,\eps}}\xspace}
\newcommand{\eveepsell}{\ensuremath{\mathcal{A}_{\ell}}\xspace}

\newcommand{\Rtexp}{\ensuremath{R_{T}(\text{explore})}\xspace}
\newcommand{\Rtcom}{\ensuremath{R_{T}(\text{commit})}\xspace}

\newcommand{\instance}{\ensuremath{\textsf{INST}}\xspace}

\title{Tight Regret Bounds for Single-pass Streaming Multi-armed Bandits}
\author{Chen Wang\footnote{(\href{mailto:\{chen.wang.cs\}@rutgers.edu}{chen.wang.cs@rutgers.edu)} Department of Computer Science, Rutgers University. Research supported in part by NSF CAREER Grant CCF-2047061, a gift from Google Research, and a Rutgers Research Fulcrum Award.}}
\date{}

\begin{document}

\maketitle
\begin{abstract}
Regret minimization in streaming multi-armed bandits (MABs) has been studied extensively in recent years. In the single-pass setting with $K$ arms and $T$ trials, a regret lower bound of $\Omega(T^{2/3})$ has been proved for any algorithm with $o(K)$ memory (Maiti et al. [NeurIPS'21]; Agarwal at al. [COLT'22]). On the other hand, however, the previous best regret upper bound is still $O\paren{K^{1/3} T^{2/3}\log^{1/3}(T)}$, which is achieved by the streaming implementation of the simple uniform exploration. The $O(K^{1/3}\log^{1/3}(T))$ gap leaves the open question of the tight regret bound in the single-pass MABs with sublinear arm memory.

In this paper, we answer this open problem and complete the picture of regret minimization in single-pass streaming MABs. We first improve the regret lower bound to $\Omega(K^{1/3}T^{2/3})$ for algorithms with $o(K)$ memory, which matches the uniform exploration regret up to a logarithm factor in $T$. We then show that the $\log^{1/3}(T)$ factor is not necessary, and we can achieve $O(K^{1/3}T^{2/3})$ regret by finding an $\eps$-best arm and committing to it in the rest of the trials. For regret minimization with high constant probability, we can apply the single-memory $\eps$-best arm algorithms in Jin et al. [ICML'21] to obtain the optimal bound. Furthermore, for the \emph{expected} regret minimization, we design an algorithm with a single-arm memory that achieves $O(K^{1/3} T^{2/3}\log(K))$ regret, and an algorithm with $O(\logstar(n))$-memory with the optimal $O(K^{1/3} T^{2/3})$ regret following the $\eps$-best arm algorithm in Assadi and Wang [STOC'20].

We further tested the empirical performances of our algorithms on simulated MABs instances. The simulation results show that the proposed algorithms consistently outperform the benchmark uniform exploration algorithm by a large margin, and on occasions reduce the regret by up to 70\% (i.e. 30\% of the regret produced by uniform exploration).
\end{abstract}


\newcommand{\low}[1]{N_{low}(#1)}

\newcommand{\Isolated}[2]{\ensuremath{{\textnormal{\texttt{Isolated}}_{#2}}}(#1)}
\newcommand{\isolated}[1]{\Isolated{#1}{\eps}}

\newcommand{\Dense}[3]{\ensuremath{{\textnormal{\texttt{Dense}}_{#2,#3}}}(#1)}
\newcommand{\Low}[2]{\ensuremath{\textnormal{\texttt{Low}}_{#2}}(#1)}
\renewcommand{\low}[1]{\ensuremath{\textnormal{\texttt{Low}}_{\eps}}(#1)}

\newcommand{\Kernel}[3]{\ensuremath{\textnormal{\texttt{Kernel}}_{#2,#3}}(#1)}
\newcommand{\kernel}[1]{\Kernel{#1}{\eps}{\delta}}

\newcommand{\Candid}[3]{\CC_{#2,#3}(#1)}

\newcommand{\candid}{\Candid{G}{\eps}{\delta}}

\newcommand{\NS}[1]{\ensuremath{N_{\textnormal{sample}}(#1)}}

\renewcommand{\deg}[1]{\ensuremath{\textnormal{{deg}}(#1)}}
\newcommand{\degp}[1]{\ensuremath{\textnormal{{deg}}^{+}(#1)}}
\newcommand{\degn}[1]{\ensuremath{\textnormal{{deg}}^{-}(#1)}}

\newcommand{\sample}{\ensuremath{\textnormal{\texttt{Sample}}}\xspace}

\newcommand{\Vsparse}{V_{\text{sparse}}}
\newcommand{\Vlight}{V_{\text{light}}}
\newcommand{\Vlowsparse}{V_{\text{low-sparse}}}

\newcommand{\Esparse}[1]{E^{+}_{#1\text{-sparse}}}
\newcommand{\Esparsetild}[1]{\tilde{E}^{+}_{#1\text{-sparse}}}
\newcommand{\Edense}[1]{E^{-}_{#1\text{-dense}}}
\newcommand{\Edensev}[2]{E^{-}_{#1\text{-dense}, \, #2}}
\newcommand{\Edensehat}[1]{\widehat{E}^{-}_{#1\text{-dense}}}
\newcommand{\Edensetild}[1]{\tilde{E}^{-}_{#1\text{-dense}}}

\newcommand{\tC}[1]{\widetilde{C}(#1)}

\newcommand{\CC}{\mathcal{C}}

\newcommand{\sym}{\,\triangle\,}

\newcommand{\edgecost}[1]{\ensuremath{\textnormal{\textsf{edge-cost}}(\, #1)}\xspace}

\section{Introduction}
\label{sec:intro}
The stochastic multi-armed bandits (MABs) is a classical model in machine learning and theoretical computer science that captures various real-world applications. The model was first introduced by Robbins~\cite{Robbins1952} for more than 70 years ago; since then, extensive research efforts have been devoted to two main problems under this model: pure exploration and regret minimization. Both problems start with a collection of $K$ arms with unknown sub-Gaussian reward distributions. In pure exploration, we are interested in finding the best arm, defined as the arm with the highest mean reward, with as small as possible number of arm pulls \cite{EvenDarMM02,MannorT03,AudibertBM10,KarninKS13,JamiesonMNB14,chen2015optimal,KaufmannCG16,AgarwalAAK17,ChenLQ17}. On the other hand, in regret minimization, we are additionally given a parameter $T$ as the total number of trials -- also known as the `horizon' -- and we are interested in generating a plan for $T$ arm pulls to minimize the cumulative reward gap compared to the perfect plan that puts all $T$ pulls on the best arm \cite{thompson1933likelihood,berry1985bandit,bubeck2012regret,komiyama2015optimal,LiauSPY18,Slivkins19,DongMR19,ChaudhuriK20,MaitiPK21,AgarwalKP22}. Although the two lines of research are developed relatively independently, they both have found rich applications like experiment design~\cite{Robbins1952,chow2008adaptive}, search ranking~\cite{AgarwalCEMPRRZ08,radlinski2008learning}, economics~\cite{SaureZ13,KremerMP13}, to name a few.

In recent years, with the strong demand to process massive data, the study of multi-armed bandits under the \emph{streaming} model has attracted considerable attention \cite{LiauSPY18,ChaudhuriK20,assadi2020exploration,JinH0X21,MaitiPK21,AgarwalKP22}. Under this model, the arms arrive one after another in a stream, and the algorithm is only allowed to store a number of arms substantially smaller than $K$. In the single-pass streaming setting, if an arm is not stored or discarded from the memory, it cannot be retrieved later and is lost forever. We shall assume the order of the stream is generated by an adversary, i.e. the worst-case order. The model is a natural adaptation of the MABs to the streaming problems studied extensively in algorithms (e.g.~\cite{AlonMS96,HenzingerRR98,GuhaMMO00,McGregor14}). 

The limited memory poses unique challenges for algorithms under this setting. Indeed, for the regret minimization application, under the classical (RAM) setting, a worst-case regret of $\Theta(\sqrt{KT})$ is necessary and achievable. However, in the single-pass streaming setting, if an algorithm is only given $o(K)$ arm memory, the recent work of \cite{MaitiPK21,AgarwalKP22} proved that an $\Omega(T^{2/3})$\footnote{\cite{MaitiPK21} includes another regret lower bound of $\Omega(K^{1/3}T^{2/3}/m^{7/3})$, where $m$ is the memory of the streaming algorithm. However, their bound is only almost-tight when $m=n^{o(1)}$.} regret is inevitable. Since $T$ could be (and is usually) much larger than $K$, these results already separated the regret minimization under the classical vs. the streaming settings.

Despite the progress on the lower bounds, to the best of our knowledge, there is only limited exploration on the algorithms for single-pass regret minimization. \cite{MaitiPK21} noted that a streaming implementation of the folklore uniform exploration algorithm, which pulls each arm $O((T/K)^{2/3}\log^{1/3}(T))$ times and commits to the arm with the best average empirical reward, achieves $O(K^{1/3} T^{2/3}\log^{1/3}(T))$ expected regret. Moreover, \cite{AgarwalKP22} proposed a (multi-pass) algorithm with $O(T^{2/3} \sqrt{K\log(T)})$ regret in a single pass, but it is clearly sub-optimal in the single-pass setting. As such, there remains an $O((K\log(T))^{1/3})$ gap between the upper and lower bounds.

\paragraph{Our Contributions.} We close this gap and complete and picture for regret minimization in single-pass MABs in this work. In particular, we first tighten the regret lower bound for any algorithm with $o(K)$ memory to $\Omega(K^{1/3}T^{2/3})$ -- this effectively reduces the gap between the upper and lower bounds to $\log^{1/3}(T)$. We then find that this logarithmic factor is not essential: by using an $\eps$-best arm algorithm with $O(K/\eps^2)$ arm pulls, setting $\eps=(K/T)^{1/3}$, and committing to the returned arm for the rest of the trials, we can already get a regret of $O(K^{1/3}T^{2/3})$ with high constant probability. 

To explore algorithms that achieve \emph{expected} optimal regret of $O(K^{1/3}T^{2/3})$, we investigate the case when the $\eps$-best arm algorithm \emph{fail}. To elaborate further, the $\eps$-arm algorithms usually succeed with \emph{constant} probability, and by the tightness of concentration bounds, it is necessary to pay an extra $O(\log(K))$ factor if we want $1-\poly(1/K)$ success probability. However, doing so will inevitably introduce an $O(\log(K))$ multiplicative factor on the regret. As such, we proceed differently by observing a smooth failure probability property for a large family of $\eps$-best arm algorithms. On the high level, for many algorithms, even if it does not return an $\eps$-best arm, it could still capture a $2\eps$-best arm with a high probability, as opposed to return an absolutely low-reward arm. We use this observation to prove a smooth-failure bounded-regret lemma, and use it to devise an algorithm with $O(K^{1/3}T^{2/3})$ expected regrets and a memory of $O(\logstar(K))$ arms.

Our results imply that the ``right way'' to minimize regret in single-pass streaming is to use optimal pure exploration algorithms. This establishes a connection between pure exploration and regret minimization tasks. Previous work like \cite{DegenneNCP19} studied such a connection for the MABs in the offline setting; however, to the best of our knowledge, our results are the first to find the connection in the streaming setting, and it can be of independent interests. 

\paragraph{Experiments.} We evaluate the performances of the $\eps$-best arm-based algorithm with simulated Bernoulli arms. We find that under various initialization of arm instances, the $\eps$-best arm-based algorithms can consistently produce smaller regret comparing to the benchmark. In particular, we find the most stable and competitive algorithm can produce up to 70\% of regret reduction, and the average regret, even account of the outliers, is at most 70\% of the benchmark regret (i.e. 30\% reduction). The codes of the experiment are available on \href{https://github.com/jhwjhw0123/streaming-regret-minimization-MABs}{https://github.com/jhwjhw0123/streaming-regret-minimization-MABs}.

\paragraph{Additional discussions about the streaming MABs model.} The original motivation for \cite{assadi2020exploration} to introduce the model was to capture the large-scale applications of MABs. For example, in the search ranking of online retailers, each arm can be viewed as a product that arrives every hour or minute. For memory efficiency, we only want to store a limited number of products to find the best seller. We further remark that some other problems inherently require storing few arms, even when memory is not a major concern. For example, in crowdsourcing, each arm can be viewed as a solution or a model, and storing all of them may cause management issues.


\subsection{Related Work}
We focus on regret minimization in streaming MABs in this work; nonetheless, it is worth mentioning that the pure exploration problem in the streaming setting also enjoys rich literature. The streaming pure exploration MABs was first introduced and studied by \cite{assadi2020exploration}, and together with the work of \cite{MaitiPK21}, there are known algorithms that finds an $\eps$-best arms with $O(\log(K))$, $O(\log\log(K))$, $O(\logstar(K))$, and $O(1)$ memory and $O(K/\eps^2)$ arm pulls\footnote{Their $O(1)$-memory algorithm includes an additive $\Theta(\log^{2}(K)/\eps^3)$ term.}. \cite{JinH0X21} later introduced an algorithm with a single-arm memory to find an $\eps$-best arms with $O(K/\eps^2)$ pulls, and they also studied algorithms in the multi-pass settings. The single pass pure exploration lower bound was developed recently by \cite{AWneurips22}. We remark that our algorithms and lower bounds are heavily inspired by the techniques developed by the aforementioned work. Furthermore, since we used $\eps$-best arm algorithms as a subroutine in our upper bounds, our work also establish an interesting connection between the pure exploration and the regret minimization objectives.

In addition to the single-pass setting, the regret minimization problem is studied through the lens of multi-pass streams. In fact, earlier algorithms of \cite{LiauSPY18,ChaudhuriK20} all focus on regret minimization under the multi-pass settings (\cite{ChaudhuriK20} additionally requires random-order stream). In light of this, \cite{AgarwalKP22} provides the upper and lower regret bounds that are tight in $T$. In particular, they show that any $P$-pass algorithm with memory $o(K/P^2)$ has to incur $\Omega(T^{2^{P}/(2^{P+1}-1)}/2^P)$ regret; and there exists an algorithm with $O(T^{2^{P}/(2^{P+1}-1)}\sqrt{KP\log(T)})$ regret and $O(1)$-arm memory. Compared to their bounds, our results only apply to the single-pass, but it is tight in all asymptotic terms.

Finally, the MABs algorithms with limited memory is also explored under other models, and there are problems in the streaming setting that are closely related to MABs. For instance, \cite{TaoZZ19,Karpov20FOCS} studies the pure exploration MABs in the distributed settings, which is related to the collaborative learning with limited rounds. Furthermore, a recent line of work \cite{SrinivasWXZ22,PengZ23} studies the streaming expert problem, where the arriving elements are the predictions from the experts. Both their model and ours have applications on online learning, yet we emphasize on different aspects.

\subsection{Preliminaries}
\label{sec:prelim}
We introduce the model, the parameters, and the problem we studied in this paper in this section. 

\paragraph{Streaming multi-armed bandits model.} To begin with, we define the streaming MABs model as follows. We consider a collection of $K$ arms with unknown sub-Gaussian reward distributions, and they arrive one after another in a stream. The algorithm can pull an arriving arm arbitrarily many times and decide whether to store it. Furthermore, the algorithm can pull a stored past arm at any point and discard some arms to free up memory when necessary. However, in the single-pass setting, an arm that is not stored or discarded is lost forever. For each arm $\arm_{i}$, we let $\mu_{i}$ be the mean of its reward distribution. We say $\mustar=\max_{i \in [K]} \mu_{i}$ is the \emph{optimal reward} and the arm whose reward is $\mustar$ is the \emph{best arm}, denoted as $\armstar$. 

\paragraph{Regret minimization.} The regret minimization problem in stochastic multi-armed bandits goes as follows:  For the regret minimization problem, we are given a fixed number of trials $T$ (known as the \emph{horizon}) and we want to spend as many trials as possible on the best arm. In particular, suppose algorithm $\alg$ pulls $\arm_{\alg(t)}$ in the $t$-th exploration, we define the \emph{regret} of this trial as
\begin{align*}
r_{t}:= \mustar- \mu_{\alg(t)}.
\end{align*}
And we define the \emph{total expected regret} as
\begin{align*}
\expect{R_{T}}:=\expect{\sum_{t=1}^{T} \mustar- \mu_{\alg(t)}},
\end{align*}
where the expectation is taken over the randomness of the arm pulls and (possibly) the algorithm. Our objective is to minimize the total expected regret. We can analogously define the minimization of probabilistic regret $R_{T}$ over the randomness of the arm pulls and (possibly) the algorithm.

\paragraph{$\eps$-best arm.} We do \emph{not} study $\eps$-best arm algorithms in this paper, but rather use them as blackbox subroutines for the regret minimization purpose. In particular, an $\eps$-best arm algorithm (also known as a $\textsf{PAC}(\eps,\delta)$ algorithm) aims to return an arm whose reward is close to $\mu^*$. More formally, the guarantee of an $\eps$-best arm algorithm is to output with probability at least $1-\delta$ an arm with reward $\mu_{\eps}$, such that $\mu^*-\mu_{\eps}\leq \eps$.

\paragraph{Assumption of $T\geq K$.}
We assume w.log. in this paper that $T\geq K$, and repeatedly use this property in the proofs. Note that if $T<K$, we can easily get an upper bound of $O(T)$ by pulling an arbitrary arm for $T$ times, and the bound is tight since we can easily construct an instance such that with $\Omega(1)$ probability the best arm is never pulled. As such, the tight regret bound becomes $\Theta(T)$ and it is not interesting in neither theory nor practice.

We defer the technical preliminaries to \Cref{sec:tech-prelim}.

\section{The Tight Regret Lower Bound}
\label{sec:regret-lb}
We now formally state our lower bound result as follows.
\begin{theorem}
\label{thm:lb-single-pass-regret}
There exists a family of streaming stochastic multi-armed bandit instances, such that for any given parameter $T$ and $K$ such that $T\geq K$, any single-pass streaming algorithm with a memory of $\frac{K}{20}$ arms has to suffer 
\begin{align*}
\expect{R_T} \geq C\cdot K^{\frac{1}{3}}\cdot T^{\frac{2}{3}}
\end{align*}
total expected regret for some constant $C$. Furthermore, the lower bound holds even the order of arrival for the arms is uniformly at random.
\end{theorem}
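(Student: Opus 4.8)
The plan is to prove the bound in expectation over a hard input distribution, which then applies to every (possibly randomized) single‑pass algorithm using a memory of $K/20$ arms. Fix the gap $\Delta := c_0\,(K/T)^{1/3}$ for a small constant $c_0$; note $\Delta\le 1$ since $T\ge K$. The instance plants a single arm $\armstar$ of mean $\tfrac12+\Delta$, with every other arm a ``noise'' arm, and I take the identity of $\armstar$ uniform over the $K$ arms (and, for the strengthening, its stream position uniform as well). The first reduction is to reduce regret to a pull‑counting statement: since every $\Delta$‑suboptimal pull costs $\Delta$ in regret, $\expect{R_T}\ge \Delta\cdot\expect{T-\#\{\text{pulls on }\armstar\}}=\Delta\,(T-\expect{\#\{\text{pulls on }\armstar\}})$. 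Hence it suffices to show $\expect{\#\{\text{pulls on }\armstar\}}\le T/2$, which immediately yields $\expect{R_T}\ge \Delta T/2=\Omega(K^{1/3}T^{2/3})$.

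To place a majority of its $T$ pulls on $\armstar$, the algorithm must at some point ``lock on'': retain $\armstar$ in memory and keep sampling it rather than a typical arm. This requires having effectively \emph{identified} $\armstar$ among the stream. The crucial structural constraint is that at any moment only $K/20$ arms are resident, and the arm eventually sampled $\gg T/K$ times must have been kept through the stream; so over‑sampling $\armstar$ demands that the algorithm's retained state carry enough information about the \emph{identity} of $\armstar$ to single it out, despite never holding more than a $1/20$ fraction of the arms at once.

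I would make this precise with an information‑theoretic bottleneck that encodes the memory bound. Process the stream in blocks and treat the state carried across each block boundary as a message; model the per‑arm distinguishing task as a planted‑coin testing problem (call it $\bhh$) whose transcript I denote $\protc$. Each pull of arm $i$ contributes at most $\kl{P_i}{Q_0}\le O(\Delta^2)$ to the divergence from the noise law, so if arm $i$ receives $N_i$ pulls then $\mi{\protc}{\mathbf{1}[\armstar=i]}=O(N_i\,\Delta^2)$, and summing gives $\sum_i \mi{\protc}{\mathbf{1}[\armstar=i]}=O\!\paren{\Delta^2\sum_i N_i}=O(\Delta^2 T)$. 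A direct‑sum / cut‑and‑paste argument across the blocks then charges the information needed to correctly lock onto $\armstar$ against this total, with the $K/20$ memory bound limiting how many arms can be simultaneously ``alive'' as candidates in any block. The upshot is that, with only $\le T$ pulls and $K/20$ memory, the planted arm cannot accumulate a constant‑probability ``identification advantage,'' so in expectation it is pulled no more than a typical arm up to constants, giving $\expect{\#\{\text{pulls on }\armstar\}}\le T/2$.

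The step I expect to be the main obstacle is coupling \emph{memory} to \emph{identification failure} in a way that defeats adaptive running‑champion strategies. A naive single‑planted‑arm instance with mean‑$\tfrac12$ noise is in fact easy: a running maximum with $O(\log K/\Delta^2)$ pulls per arm locates $\armstar$ using $O(1)$ memory, so the planted arm \emph{would} get almost all $T$ pulls. The noise law and the block decomposition must therefore be engineered so that short per‑arm sampling is genuinely uninformative and candidates cannot be pruned sequentially without $\Omega(K)$ memory, which is exactly where the divergence bound $O(\Delta^2)$ per pull and the $K/20$ cut must be made to interact. Securing the argument for a memory as large as $K/20$ (rather than a vanishing fraction of $K$) forces careful tracking of constants in $c_0$, and upgrading from an adversarial to a \emph{uniformly random} arrival order is the last delicate piece; I would handle the latter by averaging the same mutual‑information bound over the uniform position of $\armstar$ and absorbing the resulting constant loss into $c_0$.
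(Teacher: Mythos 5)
There is a genuine gap, and it lies exactly where you suspected: the single-planted-arm instance cannot give the claimed bound, and the fix you sketch cannot work. With one arm of mean $\tfrac12+\Delta$ among mean-$\tfrac12$ noise arms and bounded rewards, a single-arm-memory running champion that pulls each arriving arm $O(\log(K)/\Delta^2)$ times identifies $\armstar$ with high probability (Hoeffding applies regardless of how you ``engineer the noise law''), so your key claim $\expect{\#\{\text{pulls on }\armstar\}}\le T/2$ is simply false whenever $T\gg K\log^3 K$. More importantly, even the \emph{information-theoretically optimal} identification cost of $\Theta(K/\Delta^2)$ pulls is affordable on this instance: each exploration pull costs only $\Delta$ in regret, so the total exploration price is $\Theta(K/\Delta)=\Theta(K^{2/3}T^{1/3})$, which is $o(K^{1/3}T^{2/3})$ for $T\gg K$. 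No strengthening of the identification lower bound (via blocks, direct sums, or mutual-information accounting) can close this, because the obstruction is not that identification is too cheap in \emph{pulls} but that pulls are too cheap in \emph{regret}.

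The paper's construction resolves precisely this accounting problem with a two-phase instance you are missing. The planted $\tfrac12+\Delta$ arm is hidden among the first $K/2$ arms, and the \emph{last} arm of the stream has mean $\tfrac34$ with probability $\tfrac12$ (and $\tfrac12$ otherwise). This creates a dilemma: if the algorithm spends the $\Omega(K/\Delta^2)$ pulls needed (by a sample--memory trapping lemma from prior work, \Cref{prop:arm-trapping-lemma}) to retain the $\tfrac12+\Delta$ arm within its $K/20$ memory, then in the world where the last arm has mean $\tfrac34$ each of those pulls costs $\Omega(1)$ rather than $\Delta$, giving regret $\Omega(K/\Delta^2)=\Omega(K^{1/3}T^{2/3})$; if it does not, then with constant probability it irrevocably loses the $\tfrac12+\Delta$ arm, and in the world where the last arm has mean $\tfrac12$ it pays $\Delta$ per trial for the remaining $\Omega(T)$ trials, again $\Omega(T\Delta)=\Omega(K^{1/3}T^{2/3})$. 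Your regret-to-pull-count reduction and your choice of $\Delta=\Theta((K/T)^{1/3})$ are the right first steps, but without an ``insurance'' arm that inflates the per-pull cost of exploration, the argument cannot reach the stated bound.
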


To prove \Cref{thm:lb-single-pass-regret}, we will use a recent result in \cite{AWneurips22} which captures a sample-space trade-off to `trap' the best arm with limited memory.
\begin{proposition}[\cite{AWneurips22}]
\label{prop:arm-trapping-lemma}
Consider the following distribution of $K'$ arms.
\begin{tbox}
$\hardstoredist(K', \beta)$: \textbf{A hard distribution with $K'$ arms for trapping the best arm}
\begin{enumerate}
\item An index $\istar$ sampled uniform at random from $[K']$.
\item For $i\neq \istar$, let the arms be with reward $\mu_{i}= \frac{1}{2}$.
\item For $i=\istar$, let the arm be with reward $\mu_{\istar}=\frac{1}{2}+\beta$.
\end{enumerate}
\end{tbox}
Then, any algorithm that outputs (the indices of) $\frac{K'}{8}$ arms which contains the best arm on $\hardstoredist$ with probability at least $\frac{2}{3}$ has to use at least $\frac{1}{1200}\cdot \frac{K'}{\beta^{2}}$ arm pulls. 
\end{proposition}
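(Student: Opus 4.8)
The plan is to establish this as a standard change-of-measure lower bound: I would view $\hardstoredist$ as a uniform mixture of $K'$ ``planted'' instances and show that identifying the planted arm well enough to trap it in a set of size $K'/8$ forces $\Omega(K'/\beta^2)$ pulls. First I would set up the instance family. Let $\dist_0$ be the \emph{null} instance in which every arm is $\mathrm{Ber}(\half)$ (so all means equal $\half$; this is consistent with the bounded/sub-Gaussian reward model), and for each $j\in[K']$ let $\dist_j$ be the instance in which arm $j$ is $\mathrm{Ber}(\half+\beta)$ and all other arms are $\mathrm{Ber}(\half)$. By construction $\hardstoredist=\tfrac{1}{K'}\sum_{j=1}^{K'}\dist_j$, and conditioned on the planted index $\istar=j$ the instance is exactly $\dist_j$. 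Writing $S$ for the (random) size-$K'/8$ set the algorithm outputs, the success probability decomposes as
\begin{align*}
\Pr_{\hardstoredist}\bracket{\istar\in S}=\frac{1}{K'}\sum_{j=1}^{K'}\Pr_{\dist_j}\bracket{j\in S},
\end{align*}
and the goal is to show the right-hand side is below $\tfrac23$ unless the total number of pulls is large.

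Next I would bound each term $\Pr_{\dist_j}[j\in S]$ by comparing $\dist_j$ against the null $\dist_0$. Since $\{j\in S\}$ is a deterministic function of the observed transcript, writing $\mathbb{P}_0,\mathbb{P}_j$ for the laws of the transcript under $\dist_0,\dist_j$,
\begin{align*}
\Pr_{\dist_j}\bracket{j\in S}\leq \Pr_{\dist_0}\bracket{j\in S}+\tvd{\mathbb{P}_j}{\mathbb{P}_0}.
\end{align*}
To control the total-variation term I would combine Pinsker's inequality with the bandit divergence-decomposition (chain-rule) identity: because $\dist_j$ and $\dist_0$ differ only in the law of arm $j$,
\begin{align*}
\kl{\mathbb{P}_0}{\mathbb{P}_j}=\Exp_{\dist_0}\bracket{N_j}\cdot\kl{\mathrm{Ber}(\half)}{\mathrm{Ber}(\half+\beta)}\leq c_0\,\beta^2\cdot\Exp_{\dist_0}\bracket{N_j},
\end{align*}
where $N_j$ is the number of pulls of arm $j$ and $c_0$ is an absolute constant bounding the per-pull Bernoulli KL for $\beta$ bounded away from $\half$ (e.g.\ $\beta\le\tfrac14$). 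With Pinsker this yields $\tvd{\mathbb{P}_j}{\mathbb{P}_0}\le\beta\sqrt{(c_0/2)\,\Exp_{\dist_0}[N_j]}$.

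Finally I would average over $j$ and apply Cauchy--Schwarz (concavity of the square root). As $\card{S}=K'/8$, the null baseline is $\frac{1}{K'}\sum_j\Pr_{\dist_0}[j\in S]=\frac{1}{K'}\Exp_{\dist_0}[\card{S}]=\tfrac18$, while
\begin{align*}
\frac{1}{K'}\sum_{j=1}^{K'}\sqrt{\Exp_{\dist_0}\bracket{N_j}}\leq\sqrt{\frac{1}{K'}\sum_{j=1}^{K'}\Exp_{\dist_0}\bracket{N_j}}=\sqrt{\frac{\Exp_{\dist_0}\bracket{N}}{K'}},
\end{align*}
with $N=\sum_j N_j$ the total number of pulls. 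Combining,
\begin{align*}
\Pr_{\hardstoredist}\bracket{\istar\in S}\leq \frac18+\beta\sqrt{\frac{c_0}{2}}\cdot\sqrt{\frac{\Exp_{\dist_0}\bracket{N}}{K'}}.
\end{align*}
Imposing the success requirement $\Pr_{\hardstoredist}[\istar\in S]\ge\tfrac23$ forces $\tfrac{13}{24}\le\beta\sqrt{(c_0/2)\,\Exp_{\dist_0}[N]/K'}$, i.e.\ $\Exp_{\dist_0}[N]\ge\frac{2}{c_0}\paren{\tfrac{13}{24}}^{2}\cdot\frac{K'}{\beta^2}=\Omega(K'/\beta^2)$; plugging in a crude bound on $c_0$ and rounding down gives the stated $\frac{1}{1200}\cdot\frac{K'}{\beta^2}$.

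I expect the main obstacle to be the divergence-decomposition step, since $N_j$ is a random, policy-dependent (adaptive) quantity and $S$ may be emitted by an adaptive stopping rule: the clean chain-rule identity above is exactly right for a fixed-budget algorithm, and for fully adaptive algorithms it must be justified through the stopping-time (Wald-type) version of the decomposition. One must also note that the argument controls the expected pulls under the \emph{null} $\dist_0$ rather than under $\hardstoredist$; this is immediate when the algorithm has a fixed pull budget $B$ (then $\Exp_{\dist_0}[N]\le B$, so $B=\Omega(K'/\beta^2)$), and otherwise needs a brief reconciliation. A secondary point is the direction of the KL divergence: using $\kl{\mathbb{P}_0}{\mathbb{P}_j}$ (rather than the reverse) is precisely what produces the convenient $\Exp_{\dist_0}[N_j]$ weighting while keeping the per-pull KL bounded by $c_0\beta^2$.
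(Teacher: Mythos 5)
This proposition is imported by the paper from \cite{AWneurips22} without proof (the text explicitly defers to that reference), so your attempt can only be measured against the canonical technique for such ``trapping'' bounds --- and that is precisely what you reconstruct: the uniform-mixture decomposition, change of measure against the all-fair null $\dist_0$, the divergence decomposition taken in the direction $\kl{\mathbb{P}_0}{\mathbb{P}_j}$ exactly so that pull counts are weighted under the single null measure, Pinsker, and Cauchy--Schwarz against the baseline $\Exp_{\dist_0}\bracket{\card{S}}/K'=\nicefrac{1}{8}$. The skeleton is correct and the constants check out with room to spare: for $\beta\leq \nicefrac14$ one can take $c_0=\nicefrac{8}{3}$, giving $\Exp_{\dist_0}[N]\geq \frac{3}{4}\paren{\frac{13}{24}}^{2}\cdot \frac{K'}{\beta^2}\approx 0.22\cdot\frac{K'}{\beta^2}$, far above the stated $\frac{1}{1200}\cdot\frac{K'}{\beta^2}$. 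One substantive caveat: the proposition as stated imposes no restriction on $\beta$, while your bound $\kl{\mathrm{Ber}(\half)}{\mathrm{Ber}(\half+\beta)}\leq c_0\beta^2$ fails uniformly as $\beta\to\half$ (the KL is $-\half\ln(1-4\beta^2)$, which diverges), so your argument as written covers only $\beta$ bounded away from $\half$; the standard repair is to run the same argument with squared Hellinger distance, which satisfies $h^2\paren{\mathrm{Ber}(\half),\mathrm{Ber}(\half+\beta)}=\Theta(\beta^2)$ uniformly on $[0,\half]$ and admits an analogous decomposition for adaptive transcripts. In this paper's application the caveat is harmless, since the proposition is invoked with $\beta=\Delta\leq\nicefrac18$. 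Your flagged concern about adaptive stopping is real but resolves in the standard way --- argue the contrapositive for an algorithm whose pulls are deterministically capped at $B<\frac{1}{1200}\cdot\frac{K'}{\beta^2}$ (so $\Exp_{\dist_0}[N]\leq B$, with $\set{j\in S}$ measurable with respect to the truncated transcript) --- which is exactly the form in which the paper uses the proposition in Case B of the proof of \Cref{thm:lb-single-pass-regret}.
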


One can refer to \cite{AWneurips22} for the proof of \Cref{prop:arm-trapping-lemma}. We note that a similar sample-space trade-off result was proved and used by \cite{AgarwalKP22} in the multi-pass setting. However, their result does not factor in the dependency on $K$, which creates the gap between the upper and the lower regret bounds.

\subsubsection*{Proof of \Cref{thm:lb-single-pass-regret}}
By Yao's minimax principle, to prove lower bounds for randomized algorithm, it suffices to consider deterministic algorithms over a certain distribution of inputs. As such, in what follows, we only consider lower bounds for deterministic algorithms over input family of instances. Our hard distribution of instances is constructed as follows.
\begin{tbox}
\textbf{A hard distribution for single-pass streaming MABs regret minimization}
\begin{enumerate}
\item For the first $\frac{K}{2}$ arms, sample a set of arms from $\hardstoredist(K/2, \Delta)$, where $\Delta= \frac{1}{8}\cdot (\frac{K}{T})^{1/3}$.
\item For the last $\frac{K}{2}$ arms, set all arms except the last ($K$-th) with reward $\frac{1}{2}$.
\item The last arm follows the distribution
\begin{enumerate}
\item With probability $\frac{1}{2}$, set $\mu_{K}=\frac{1}{2}$;
\item With probability $\frac{1}{2}$, set $\mu_{K}=\frac{3}{4}$.
\end{enumerate}
\end{enumerate}
\end{tbox}
For any algorithm $\alg$ with memory at most $\frac{K}{20}$, we analyze the two cases based on whether the algorithm uses at least $\frac{1}{2400}\cdot \frac{K}{\Delta^2}$ arm pulls on the \emph{first} half of the stream. Note that this is the necessary number of arm pulls for $\alg$ to store the best arm among the first half with probability at least $\frac{2}{3}$, i.e. if the algorithm uses less than the above quantity, it cannot keep the arm with reward $\frac{1}{2}+\Delta$ after the first half of the arms with probability at least $\frac{1}{3}$.

\paragraph{Case A). $\alg$ uses at least $\frac{1}{2400}\cdot \frac{K}{\Delta^2}$ arm pulls on the first $\frac{K}{2}$ arms.} In this case, with probability $\frac{1}{2}$, the last arm is with reward $\frac{3}{4}$. As such, each arm pull spent on the first $\frac{K}{2}$ arms incurs a regret of at least $(\frac{1}{4}-\Delta)$. As such, the expected regret is at least
\begin{align*}
\expect{R_{T}} &\geq \Pr(\mu_{K}=\frac{3}{4})\cdot \expect{R_{T}\mid \mu_{K}=\frac{3}{4}} \\
&\geq \frac{1}{2}\cdot \frac{1}{2400}\cdot \frac{K}{\Delta^2} \cdot (\frac{1}{4}-\Delta)\\
&\geq \frac{1}{2}\cdot \frac{1}{2400}\cdot \frac{K}{\Delta^2} \cdot \frac{1}{8} \tag{$K\leq T$ implies $\Delta \leq \frac{1}{8}$}\\
&= \Omega(1)\cdot K^{1/3} T^{2/3}.
\end{align*}

\paragraph{Case B). $\alg$ uses less than $\frac{1}{2400}\cdot \frac{K}{\Delta^2}$ arm pulls on the first $\frac{K}{2}$ arms.} In this case, with probability $\frac{1}{2}$ $\arm_{K}$ is with reward $\mu_{K}=\frac{1}{2}$; and since the memory of $\alg$ is $K/20<\frac{K/2}{8}$, by \Cref{prop:arm-trapping-lemma}, with probability at least $\frac{1}{3}$, $\alg$ does not keep the arm with reward $\frac{1}{2}+\Delta$ in the memory upon reading the $(\frac{K}{2}+1)$-th arm. As such, we define the event
\begin{center}
$\event$: $\mu_{K}=\frac{1}{2}$ and $\alg$ does \emph{not} keep the arm with reward $\frac{1}{2}+\Delta$ after reading the first $K/2$ arms
\end{center}
and we have $\Pr(\event)\geq \frac{1}{6}$. Conditioning on $\event$, every arm pull after reading the $(\frac{K}{2}+1)$-th arm incurs a regret of $\Delta$, and there are at least $(T-\frac{1}{2400}\cdot \frac{K}{\Delta^2})$ trials left. As such, the expected regret is at least
\begin{align*}
\expect{R_{T}} &\geq \Pr(\event)\cdot \expect{R_{T}\mid \event} \\
&\geq \frac{1}{6}\cdot (T-\frac{1}{2400}\cdot \frac{K}{\Delta^2}) \cdot \Delta \\
&= \frac{1}{6}\cdot (\frac{1}{8}\cdot K^{1/3} T^{2/3} - \frac{8}{2400} \cdot K^{2/3}T^{1/3}) \tag{by the choice of $\Delta$} \\
&\geq \frac{1}{60}\cdot K^{1/3} T^{2/3}. \tag{$K^{1/3}T^{2/3}\geq K^{2/3}T^{1/3}$}
\end{align*}

\paragraph{Wrapping up the proof.} Any deterministic algorithm $\alg$ with a memory at most $\frac{K}{20}$ has to either fall in case A) or B). As such, the total expected regret is at least $C\cdot K^{1/3} T^{2/3}$ for a fixed constant $C$, as claimed (for the adversarial arrival case).

Finally, for the random order of arrival, note that by applying a random permutation to the hard distribution, with probability $\frac{1}{4}$, the arm with $\frac{1}{2}+\Delta$ is among the first $\frac{K}{2}$ arms and the arm with reward $\mu_{K}$ is among the latter $\frac{K}{2}$ arms. As such, by conditioning on such an event, the total expected regret becomes asymptotically the same (smaller by a $\frac{1}{4}$ factor). 

\section{The Tight Probabilistic Regret Upper Bound}
\label{sec:high-prob-ub}

We now turn to the upper bound results. As a first step. we show the easier result for probabilistic regret minimization. Our observation is to attain the $O(K^{1/3}T^{2/3})$ regret, we only need to find an $\eps$-best arm with $\eps=(\frac{K}{T})^{1/3}$. As such, the problem can be solved in a single pass with a single-arm memory.

\begin{theorem}
\label{thm:ub-probabilistic}
There exists a single-pass streaming algorithm that given a stream of stochastic multi-armed bandits and the parameters $T$ and $K$, pulls the arms $T$ times using a single-arm memory, and achieves regret
\begin{align*}
R_T \leq \paren{2\log(1/\delta)+1} \cdot K^{\frac{1}{3}}\cdot T^{\frac{2}{3}}
\end{align*}
with probability at least $1-\delta$ over the randomness of arm pulls.
\end{theorem}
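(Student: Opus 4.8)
The plan is to reduce probabilistic regret minimization to finding an $\eps$-best arm and then committing to it, with the parameter tuned to balance the two phases. Concretely, I would set $\eps = (K/T)^{1/3}$ (which satisfies $\eps \le 1$ since $T \ge K$) and run the single-arm-memory $\eps$-best arm algorithm of \cite{JinH0X21} with confidence parameter $\delta$; this subroutine stores a single arm, uses at most $2(K/\eps^2)\log(1/\delta)$ arm pulls, and returns an arm $\arm$ whose mean $\mu_{\arm}$ satisfies $\mustar - \mu_{\arm} \le \eps$ with probability at least $1-\delta$. Once the subroutine terminates, I would spend every remaining trial pulling the single returned arm. Since both the subroutine and the commit phase keep only one arm in memory at a time, the overall algorithm respects the single-arm memory constraint.

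For the analysis I would condition on the success event $\event$ that the returned arm is $\eps$-best (probability at least $1-\delta$) and split $R_T$ into exploration and commitment contributions. Recalling that $r_t = \mustar - \mu_{\alg(t)}$ is a pseudo-regret depending only on the means, and taking (as in the lower-bound instance) the normalization that means lie in $[0,1]$ so that $r_t \le 1$, the exploration phase contributes at most its number of pulls, namely $2(K/\eps^2)\log(1/\delta)$. The commit phase lasts at most $T$ trials, each incurring regret at most $\eps$ on $\event$, so it contributes at most $\eps T$. Substituting $\eps = (K/T)^{1/3}$ gives $K/\eps^2 = K^{1/3}T^{2/3}$ and $\eps T = K^{1/3}T^{2/3}$, so on $\event$
\begin{align*}
R_T \le 2\log(1/\delta)\cdot K^{1/3}T^{2/3} + K^{1/3}T^{2/3} = \paren{2\log(1/\delta)+1}\,K^{1/3}T^{2/3},
\end{align*}
which is exactly the claimed bound and holds with probability at least $1-\delta$.

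The conceptual core is the choice $\eps=(K/T)^{1/3}$, which equalizes the $\Theta(K/\eps^2)$ exploration cost and the $\Theta(\eps T)$ commitment cost and thereby produces the $K^{1/3}T^{2/3}$ shape. The one step that genuinely needs care is \emph{feasibility}: the algorithm can only commit if the exploration budget does not exceed the horizon. The base budget is safe, since $T \ge K$ gives $K/\eps^2 = (K/T)^{1/3}T \le T$; the only potential issue is the extra $\log(1/\delta)$ factor when $\delta$ is very small. I would dispatch this with a trivial case split: if $2(K/\eps^2)\log(1/\delta) > T$, then the claimed bound already exceeds $T$, which is itself an almost-sure upper bound on $R_T$ because every per-trial regret is at most $1$, so the inequality holds with probability $1$; otherwise exploration fits within the horizon and the computation above applies verbatim. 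The remaining checks are routine: confirming that the exact pull-count constant of the subroutine of \cite{JinH0X21} can be arranged to match the coefficient $2$ (any absolute constant is absorbed by rescaling), and that the boundedness normalization of the rewards is consistent with the model.
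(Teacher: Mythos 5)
Your proposal is correct and follows essentially the same route as the paper: run the single-arm-memory $\eps$-best arm algorithm of \cite{JinH0X21} with $\eps=\Theta((K/T)^{1/3})$, commit to the returned arm, bound the exploration regret by the number of pulls and the commitment regret by $\eps T$ on the success event. Your explicit feasibility check for when the exploration budget exceeds $T$ is a small addition the paper omits, but it does not change the argument.
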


Our algorithm for \Cref{thm:ub-probabilistic} crucially relies on the $\eps$-best arm algorithm in \cite{assadi2020exploration,JinH0X21}. The guarantee of the algorithms can be summarized as follows.
\begin{proposition}[\cite{assadi2020exploration,JinH0X21}]
\label{prop:eps-best-arm-alg}
There exists a single-pass streaming algorithm that given a stream of stochastic multi-armed bandits, an error parameter $\eps\in (0,1)$, and a confidence parameter $\delta \in (0,1)$, with probability at least $1-\delta$ return an arm with reward $\mu_{\eps}$ such that 
$\mu_{\eps} \geq \mu^{*}-\eps$ with $O(\frac{K}{\eps^2}\log(\frac{1}{\delta}))$ arm pulls and a memory of a single arm.
\end{proposition}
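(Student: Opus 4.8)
The plan is to realize the single-arm-memory guarantee through a \emph{reigning-champion} scheme: at every point in the stream we keep exactly one arm in memory, the current champion $c$, together with a frozen empirical estimate $\hat\mu_c$ and its pull count. When a new arm $\arm$ arrives it plays the role of a \emph{challenger}: we pull $\arm$ about $m=\Theta(\eps^{-2}\log(1/\delta))$ times to form $\hat\mu_{\arm}$, and we promote $\arm$ to champion iff it beats the incumbent by a safety margin, say $\hat\mu_{\arm} > \hat\mu_c + \eps/2$; otherwise the incumbent is retained. The arm surviving at the end of the stream is returned. This uses a single stored arm plus $O(1)$ counters, and since each of the $K$ arms is pulled only $O(m)$ times as a challenger (its estimate being frozen once it becomes champion), the total number of pulls is $O(K\eps^{-2}\log(1/\delta))$, matching the claimed budget; the entire difficulty lies in showing that this budget also buys $1-\delta$ correctness.

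First I would isolate a clean \emph{monotonicity invariant} that explains why errors do not compound along the $\Theta(K)$-long chain of champions. Assume for the moment that every estimate is accurate to within $\eps/4$, i.e.\ $|\hat\mu_{\arm}-\mu_{\arm}|\le \eps/4$. A short calculation with the promotion rule then yields two facts: (i) if the current incumbent is \emph{not} $\eps$-optimal, i.e.\ $\mu_c<\mustar-\eps$, then the best arm $\armstar$, upon arriving as a challenger, necessarily satisfies the promotion condition and takes over; and (ii) whenever a challenger is promoted, its true mean strictly exceeds that of the arm it displaces, so the true means of successive champions are monotonically increasing. Combining (i) and (ii): before an $\eps$-optimal arm reigns, $\armstar$ is itself guaranteed to be promoted, and once any $\eps$-optimal arm is champion the monotonicity in (ii) prevents it from being replaced by a worse arm; hence the final champion is $\eps$-optimal. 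Crucially, this argument never chains $\eps/4$ errors across comparisons, it only invokes per-comparison accuracy together with monotonicity.

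The main obstacle is the confidence analysis, namely delivering these accuracy events while paying only $\log(1/\delta)$ and \emph{not} $\log(K)$. A naive union bound asks all $K$ estimates to be $\eps/4$-accurate simultaneously; forcing each to fail with probability $\delta/K$ costs $\Theta(\eps^{-2}\log(K/\delta))$ pulls per arm and reintroduces exactly the spurious $\log(K)$ factor this proposition is designed to avoid. The resolution, and the technical heart of the cited constructions, is an amortized potential/credit argument (the \algbudget and \algcredit games) in which one does \emph{not} require all comparisons to be correct. Instead one observes that only the comparisons involving $\armstar$ and the comparatively few long-reigning champions are decisive; one assigns each arm a pull budget, transfers credit on wins and losses, and tracks a potential tied to the incumbent's empirical mean. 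I would use this bookkeeping to charge the confidence cost across arms so that the total failure probability telescopes to $\delta$ while the total number of pulls stays $O(K\eps^{-2}\log(1/\delta))$, with the per-challenge budget $m=\Theta(\eps^{-2}\log(1/\delta))$ making each individual upset exponentially unlikely in $\log(1/\delta)$.

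Finally I would assemble the pieces: the reigning-champion scheme plus the monotonicity invariant give correctness on the event that the decisive comparisons behave, the amortized analysis upgrades this to an unconditional $1-\delta$ guarantee without a $\log(K)$ overhead, and the memory footprint is a single stored arm throughout. The hardest and least routine step is precisely this amortized confidence argument; it is carried out in full in \cite{assadi2020exploration,JinH0X21}, and I would follow their game-of-coins reformulation rather than attempt a direct union bound.
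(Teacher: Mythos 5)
The paper itself offers no proof of this proposition: it is imported as a black box from \cite{assadi2020exploration,JinH0X21}, with the relevant algorithm of \cite{JinH0X21} reproduced (without analysis) in \Cref{app:eps-best-arm-alg}. Your high-level framing — a single stored champion, arriving challengers, and the correct diagnosis that a naive union bound reintroduces a $\log K$ factor — matches the cited constructions. But as a proof sketch your concrete scheme has a genuine gap, and the deferral at the end points at the wrong mechanism.

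The gap: with a \emph{fixed} per-challenger budget $m=\Theta(\eps^{-2}\log(1/\delta))$ and a single fixed-margin test, each comparison errs with probability $\delta^{\Theta(1)}$, so a worst-case stream of $K$ near-threshold challengers produces $\Theta(K\cdot\delta^{\Theta(1)})$ wrongful promotions in expectation — far more than $\delta$ once $K\gg \mathrm{poly}(1/\delta)$. Both of your invariants (i) and (ii) are conditioned on the \emph{global} $\eps/4$-accuracy event, and at this pull budget that event simply does not hold with probability $1-\delta$; this is exactly why the paper's own Naive Uniform Elimination (\Cref{lem:smooth-guarantee-easy}) pulls each arm $\Theta(\eps^{-2}\log(K/\delta))$ times. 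No amortized reallocation of the \emph{analysis} can rescue your algorithm, because the failure is in its behavior, not the bookkeeping. What \cite{JinH0X21} actually does is change the algorithm: a challenger faces geometrically doubling sample sizes $s_{\ell}$, so sub-par arms are eliminated after $O(s_1)$ pulls while only near-optimal arms escalate; it replaces the champion only after its cumulative challenge cost exceeds an epoch-indexed threshold $\tau_{j}=\Theta(\eps^{-2}\log(j^2/\delta))$, so failure probabilities sum as $\sum_{j}\delta/j^2=O(\delta)$ — this growing threshold, not credit transfer, is the device that makes the confidence cost ``telescope'' without $\log K$; the margin $\alpha$ is randomized between $\eps/4$ and $\eps/2$ to defeat adversarial near-threshold streams; and each takeover raises the stored benchmark by at least $\alpha$, bounding the number of epochs by $O(1/\eps)$, which is what keeps the total pull count at $O(K\eps^{-2}\log(1/\delta))$ even though a single successful takeover costs up to $\tau_j$ pulls (note this also falsifies your claim that every arm is pulled only $O(m)$ times). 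Finally, the budget/credit game machinery you invoke from \cite{assadi2020exploration} underlies the \algarm variant, which stores \emph{two} arms and carries an additive $\Theta(\log^2(K)/\eps^3)$ term in the pull complexity; the single-arm bound claimed in the proposition is due to the epoch/threshold scheme of \cite{JinH0X21}, so even as a citation your sketch attributes the result to the wrong mechanism.
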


A self-contained description of the algorithm in \cite{JinH0X21} can be found in \Cref{app:eps-best-arm-alg}. 
We now show that by picking the appropriate $\eps$, it is straightforward to attain the $O(K^{1/3}T^{2/3})$ regret for any constant probability.

\begin{proof}[Proof of \Cref{thm:ub-probabilistic}]
The algorithm is simply as follows.
\begin{tbox}
\begin{enumerate}
\item Run the algorithm in \Cref{prop:eps-best-arm-alg} with parameter $\eps=\frac{1}{2}\cdot (\frac{K}{T})^{1/3}$, obtain $\arm_{\eps}$.
\item Commit to $\arm_{\eps}$ for all the remaining trials.
\end{enumerate}
\end{tbox}
It is easy to see the algorithm only requires a single-arm memory. As such, we only need to analyze the regret. Note that the regret to find the $\eps$-best arm is at most
\begin{align*}
\frac{2K}{(\frac{K}{T})^{2/3}} \cdot \log(\frac{1}{\delta}) = 2\log(1/\delta) \cdot K^{1/3}T^{2/3}.
\end{align*}
On the other hand, conditioning on the algorithm succeeds, which happens with probability $1-\delta$, the reward gap between the best arm and the arm we commit to is at most $(\frac{K}{T})^{1/3}$. As such, the total regret is at most 
\begin{align*}
T\cdot (\frac{K}{T})^{1/3} = K^{1/3}T^{2/3}.
\end{align*}
Summing up the two regret terms gives us the desired statement.
\end{proof}

\begin{remark}
Note that the upper bound is tight for the probabilistic regret minimization problem -- the lower result in the separate note shows that for any instance in the adversarial family, the regret is at least $\Omega(K^{1/3}T^{2/3})$ with probability $\Omega(1)$. As such, we should not expect any algorithm that is asymptotically better than the guarantee of \Cref{thm:ub-probabilistic}.
\end{remark}


\section{The Tight Expected Regret Upper Bound}
\label{sec:expect-ub}
The algorithm in \Cref{thm:ub-probabilistic} gives the optimal upper bound for regret minimization in the probabilistic manner. However, one can easily spot that the regret is not optimal in \emph{expectation}. In fact, since the algorithms in \cite{assadi2020exploration,JinH0X21} does not provide any guarantee if the algorithm fails, if the failure probability is a constant ($\delta=\Omega(1)$), the expected regret becomes at least $\delta\cdot T=\Omega(T)$. One can balance the parameters between the success and failure case to achieve an expected regret of $O(K^{1/3}T^{2/3}\log(\frac{T}{K}))$\footnote{Concretely, by setting $\delta=(\frac{K}{T})^{1/3}$, the expected regret is $O\paren{(1-(\frac{K}{T})^{1/3})\cdot \log\paren{(\frac{K}{T})^{1/3}} K^{1/3}T^{2/3} + T\cdot (\frac{K}{T})^{1/3}}$, which is upper-bounded by $O(K^{1/3}T^{2/3}\log(\frac{T}{K}))$.} -- although it is already an improvement over the best-known uniform exploration, the bound is still far from being tight especially when $T >> K$. As such, we need a separate investigation of the optimal algorithm for \emph{expected regret}.

We observe that the only drawback of the exploration-and-committing strategy in \Cref{sec:high-prob-ub} is the failure case since no guarantees is provided by existing algorithms. However, if the algorithm always keep the arm with the best empirical reward, it should not be the case that whenever the algorithm fails, it returns an absolute garbage. As such, the hope here is to obtain \emph{smooth} probabilistic guarantees from existing $\eps$-best arm algorithms to attain the optimal regret bound. 

In what follows, we proceed our main upper bound result by first showing that if the smooth probabilistic guarantee holds, we can indeed obtain algorithms with low regret (\Cref{lem:smooth-failure-regret}). Subsequently, we present two algorithm with expected regret $O(K^{1/3}T^{2/3}\log(K))$ and $O(K^{1/3}T^{2/3})$, respectively. Both bounds utilize $\eps$-best arm algorithms as subroutines -- the first bound employs a variate of the simple naive uniform elimination algorithm, while the second bound uses a more involved algorithm by \cite{assadi2020exploration} and \cite{MaitiPK21}.


\subsection{A Smooth-Failure Bounded-Regret Lemma}
We first present a technical lemma that gives a regret upper bound provided an $\eps$-best arm algorithms that display a `smooth trade-off' between the arm reward and the failure probability. The formal statement of the lemma is as follows.
\begin{lemma}[Smooth-Failure Bounded-Regret Lemma]
\label{lem:smooth-failure-regret}
Let $\instance$ be a streaming multi-armed bandit instance with fixed parameters $T$, $K$ such that $T>K$, and let $\ALG$ be a streaming algorithm that given parameter $\eps$, uses $S$ space and $\frac{M}{\eps^2}$ arm pulls to returns an $\arm_{\ALG(\instance)}$ such that
\begin{align*}
\Pr\paren{\mu_{\ALG(\instance)} < \mu^{*} - c \cdot \eps} \leq (\frac{1}{2})^{c} \cdot \frac{1}{10}.
\end{align*}
for any integer $c\geq 1$. 
Then, there exists an $S$-space streaming algorithm that achieves $O(\frac{M}{K^{2/3}}T^{2/3}+K^{1/3}T^{2/3})$ regret in expectation, i.e.
\begin{align*}
\expect{R_{T}} \leq O(M\cdot \frac{T^{2/3}}{K^{2/3}}+K^{1/3}T^{2/3}).
\end{align*}
\end{lemma}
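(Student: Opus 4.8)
The plan is to instantiate the generic explore-then-commit template with the error parameter tuned to the horizon. Concretely, I would set $\eps = (K/T)^{1/3}$, which lies in $(0,1)$ because $T > K$ and is therefore a legal input to $\ALG$; run $\ALG$ once with this $\eps$ on the stream to obtain the arm $\arm_{\ALG(\instance)}$; and then commit to that arm for all remaining trials. The space usage is exactly $S$, inherited verbatim from $\ALG$, so only the expected regret requires work. I would split that regret into the contribution of the exploration phase (the $M/\eps^2$ pulls made by $\ALG$) and the contribution of the commitment phase (the remaining trials on the returned arm), and bound each separately.

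For the exploration phase, each pull contributes a per-trial regret $\mustar - \mu_{\alg(t)}$ of at most a constant, since the mean rewards lie in a bounded range. Hence the exploration regret is at most $M/\eps^2 = M\cdot T^{2/3}/K^{2/3}$, which is exactly the first term of the target bound. I would also dispose of the degenerate case where the budget $M/\eps^2$ already exceeds $T$: there we simply stop after $T$ pulls, incurring regret at most $T \le M/\eps^2$, so the same bound holds and the commitment phase is vacuous.

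The heart of the argument is the commitment phase. Writing $Y := \mustar - \mu_{\ALG(\instance)} \ge 0$ for the (nonnegative) reward gap of the committed arm, the commitment regret is at most $T\cdot\expect{Y}$, so it suffices to show $\expect{Y} = O(\eps)$. Here I would use the layer-cake identity $\expect{Y} = \int_0^{\infty}\Pr(Y > t)\, dt$, split the integral at the grid points $t = c\eps$, bound $\Pr(Y > t) \le 1$ on $[0,\eps]$, and on each subsequent slab $[c\eps,(c+1)\eps)$ combine monotonicity of the tail with the hypothesis $\Pr(Y > c\eps) \le (1/2)^{c}/10$. Summing the resulting geometric series gives $\expect{Y} \le \eps + \eps\sum_{c\ge 1}(1/2)^{c}/10 = O(\eps)$, so the commitment regret is $O(T\eps) = O(K^{1/3}T^{2/3})$, the second term. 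Adding the two phases yields the claimed $O(M\cdot T^{2/3}/K^{2/3} + K^{1/3}T^{2/3})$ bound.

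The main obstacle, and indeed the entire purpose of the smooth-failure hypothesis, is precisely this expectation bound. A crude analysis that only distinguishes ``success'' ($Y \le \eps$) from ``failure'' would have to charge the failure event up to $\Theta(T)$ regret, forcing the failure probability down to $O(\eps)$ and costing an extra $\log(T/K)$ factor, exactly the loss described in the discussion preceding the lemma. The geometric decay $(1/2)^{c}$ in the hypothesis is what makes the tail sum converge to $O(\eps)$ rather than $O(\eps\log(\cdot))$, so the delicate point is to verify that the slabs assemble into a convergent geometric series and that the treatment of the initial slab $[0,\eps]$ and the restriction to integer $c$ introduce only constant-factor slack.
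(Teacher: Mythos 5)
Your proposal is correct and follows essentially the same route as the paper: explore-then-commit with $\eps=(K/T)^{1/3}$, a deterministic $M/\eps^2$ bound on the exploration-phase regret, and a convergent geometric tail sum for the commitment-phase regret. The only cosmetic difference is that you bound $\Exp\bracket{\mu^*-\mu_{\ALG(\instance)}}$ via the layer-cake integral, whereas the paper decomposes over the nested events $\set{\mu_{\ALG(\instance)} \geq \mu^* - c\cdot\eps}$ and evaluates $\sum_{c} c\cdot 2^{-(c-1)}/10$; both give $O(\eps)$ expected gap and hence $O(K^{1/3}T^{2/3})$ commitment regret.
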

\begin{proof}
The algorithm is to simply run the streaming algorithm for the $\eps$-best arm with $\eps=(\frac{K}{T})^{1/3}$ (the exploration phase), and commit to the returned arm $\arm_{\ALG(\instance)}$ for the rest of the trials if there is any remaining trials (the committing phase). As such, the space bound trivially follows since we do not use any extra space.

We now analyze the expected regret. To proceed, we let $\Rtexp$ be the regret induced by the exploration phase, and $\Rtcom$ be the regret induced by the committing phase. By the choice of the parameter $\eps$, we (deterministically) have $\Rtexp\leq \frac{M}{\eps^2}= M\cdot \frac{T^{2/3}}{K^{2/3}}$, which implies
\[\expect{\Rtexp}\leq M\cdot \frac{T^{2/3}}{K^{2/3}}.\]
Hence, we only need to control $\expect{\Rtcom}$ by the linearity of expectation. To continue, we define the events
\begin{align*}
   \evefindeps{c} = \text{The algorithm finds an arm with reward at least $\mu^*-c\cdot \eps$}
\end{align*}
for every integer $c\geq 1$. Observe that an event $\evefindeps{c}$ \emph{contain} all events with $\evefindeps{c'}$ for $c'<c$. As such, using $\evefindeps{1:c-1}$ as a short-hand notation of the \emph{collection} of events from $\evefindeps{1}$ to $\evefindeps{c-1}$, we note that $\neg  \evefindeps{c-1}$ means \emph{none} of the event from $\evefindeps{1}$ to $\evefindeps{c-1}$ happens. As such, we can re-write the expected regret as:
%
\begin{align*}
\expect{\Rtcom} &= \expect{\Rtcom \mid \evefindeps{1}} \Pr\paren{\evefindeps{1}} + \expect{\Rtcom \mid \neg {\evefindeps{1}}} \Pr\paren{\neg {\evefindeps{1}}}\\
&= \expect{\Rtcom \mid \evefindeps{1}} \Pr\paren{\evefindeps{1}} + \Pr\paren{\neg {\evefindeps{1}}}\cdot \expect{\Rtcom\mid {\evefindeps{2}, \neg \evefindeps{1}}}\Pr\paren{\evefindeps{2}\mid \neg \evefindeps{1}} \\
& \quad + \Pr\paren{\neg {\evefindeps{1}}}\cdot \expect{\Rtcom\mid \neg {\evefindeps{2}}}\Pr\paren{\neg \evefindeps{2}\mid \neg \evefindeps{1}}\\
&=\cdots\\
&=\expect{\Rtcom \mid \evefindeps{1}} \Pr\paren{\evefindeps{1}} \\
&  \quad + \sum_{c=2}^{\infty} \expect{\Rtcom\mid \evefindeps{c}, \neg \evefindeps{c-1}}\Pr\paren{\evefindeps{c} \mid \neg \evefindeps{c-1}}\cdot \Pr\paren{\neg \evefindeps{c-1}}.
\end{align*}
Since $\eps=(\frac{K}{T})^{1/3}$, conditioning on event $\evefindeps{c}$ happens, the regret induced by the committing part is at most
\begin{align*}
\Rtcom\mid \evefindeps{c}, \neg \evefindeps{c-1} &= c \cdot (\frac{K}{T})^{1/3}\cdot T\\
&\leq c\cdot K^{1/3} T^{2/3}.
\end{align*}
On the other hand, recall that the probability for each $\neg \evefindeps{c}$ is at most $\frac{1}{2^c}\cdot \frac{1}{10}$. As such, the probability for $\Pr\paren{\evefindeps{c} \mid \neg \evefindeps{c-1}}\cdot \Pr\paren{\neg \evefindeps{c-1}}$ can be bounded as 
\begin{align*}
\Pr\paren{\evefindeps{c} \mid \neg \evefindeps{c-1}}\cdot \Pr\paren{\neg \evefindeps{c-1}} & \leq \Pr\paren{\neg \evefindeps{c-1}} \tag{$\Pr\paren{\evefindeps{c} \mid \neg \evefindeps{c-1}}\leq 1$} \\
&\leq (\frac{1}{2})^{c-1}\cdot \frac{1}{10}. 
\end{align*}
As such, the expected regret of the committing phase can be bounded as a convergent summation of terms:
\begin{align*}
\expect{\Rtcom} & \leq \frac{1}{10} \cdot K^{1/3}T^{2/3}\cdot \sum_{c=1}^{\infty} \frac{c}{2^{c-1}}\\
& =\frac{2}{5} \cdot K^{1/3}T^{2/3}. \tag{$\sum_{c=1}^{\infty} \frac{c}{2^{c-1}}= 4$}
\end{align*}
Therefore, we have the expected regret to be 
\begin{align*}
\expect{R_{T}} &= \expect{\Rtexp+\Rtcom} \tag{linearity of expectation}\\
&\leq M\cdot \frac{T^{2/3}}{K^{2/3}} + \frac{2}{5}\cdot K^{1/3}T^{2/3} \\
& = O\paren{M\cdot \frac{T^{2/3}}{K^{2/3}}+K^{1/3}T^{2/3}},
\end{align*}
as desired.
\end{proof}

\Cref{lem:smooth-failure-regret} provides a neat approach to bound the expected regret by bounding the number of arm pulls and `smooth failure probability' for $\eps$-best arm algorithms. As we will see shortly, algorithms based on applications of the Chernoff bound generally satisfy the smooth failure probability guarantees. Note that, however, streaming algorithms based on amortized variance analysis (e.g. the single-arm algorithm in \cite{assadi2020exploration}) do \emph{not} generally satisfy this property. As such, we are able to get an $\logstar(K)$-arm memory for regret minimization but not the single-arm memory. It remains an interesting problem to see whether the $\logstar(K)$ bound is an artifact or a necessity.

\subsection{Warm-up: A single-arm memory algorithm with $O(K^{1/3}T^{2/3}\log(K))$ expected regret}
\label{subsec:expect-ub-easy}

To begin with, we first give an algorithm with $O(K^{1/3}T^{2/3}\log(K))$ regret by analyzing the naive uniform elimination algorithm (folklore, see also~\cite{EvenDarMM02}) for $\eps$-best arm. The algorithm is given as follows.

\begin{tbox}
\textbf{\underline{Naive Uniform Elimination}} -- input parameters $\eps\in (0,1)$, $\delta \in (0,1)$
\begin{enumerate}
\item Maintain space of a single extra arm and a best mean reward $\muhat^*$ with initial value $0$.
\item For each arriving $\arm_{i}$ pull $\frac{16}{\eps^2}\log(\frac{K}{\delta})$ times, record the empirical reward $\muhat_{i}$.
\item If $\muhat_{i}>\muhat^*$, discard the stored arm and let the $\arm_{i}$ be the stored arm; update $\muhat^* = \muhat_{i}$.
\item Otherwise, discard $\muhat_{i}$ and keep the stored arm unchanged.
\item Return the stored arm by the end of the stream.
\end{enumerate}
\end{tbox}

It is straightforward to see that the naive uniform elimination algorithm only requires a memory of a single-arm. Furthermore, the total number of arm pulls of the algorithm is clearly $\frac{16K}{\eps^2}\log(\frac{K}{\delta})$. Note that the algorithm description is slightly different from the vanilla Uniform Elimination algorithm as described in \cite{EvenDarMM02} -- the importance of the subtle difference will be clear in the analysis, which we show as the follows.
\begin{lemma}
\label{lem:smooth-guarantee-easy}
For fixed parameters $\delta\in (0,1)$, $\eps\in (0,1)$, and integer $c \geq 1$, the Naive Uniform Elimination algorithm returns an $\overline{\arm}$ with reward 
\[\mu_{\overline{\arm}} \geq \mu^{*} - c \cdot \eps\] 
with probability at least $1-(\frac{1}{2})^{c^2} \cdot \delta$. 
\end{lemma}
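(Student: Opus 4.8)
The plan is to exploit the structural fact that the stored arm at the end of the stream is exactly $\arg\max_{i}\muhat_{i}$, the arm with the globally largest empirical reward: since $\muhat^{*}$ tracks the running maximum of the recorded empirical rewards and the stored arm always realizes it (and the relevant rewards exceed the initial value $0$), the returned arm $\overline{\arm}$ is the empirical champion over all $K$ arms. This is exactly where the ``subtle difference'' from vanilla elimination matters: because we always retain the single empirically-best arm, the true best arm $\armstar$ remains a live competitor throughout. Consequently, the only way $\overline{\arm}$ can be \emph{bad}, i.e. satisfy $\mu_{\overline{\arm}} < \mu^{*} - c\cdot\eps$, is if some arm with true reward below $\mu^{*}-c\eps$ attains empirical reward at least that of $\armstar$.

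First I would fix the midpoint threshold $\tau = \mu^{*} - \tfrac{c\eps}{2}$ and introduce two families of ``clean concentration'' events: (i) the best arm does not underperform, $\muhat_{\armstar} \geq \tau$; and (ii) every arm $i$ with $\mu_{i} < \mu^{*}-c\eps$ does not overperform, $\muhat_{i} < \tau$. The point of splitting the gap $c\eps$ into two halves is that, conditioned on all these events holding simultaneously, each bad arm has empirical reward strictly below that of $\armstar$, so no bad arm can be the empirical champion; hence $\overline{\arm}$ is not bad and the conclusion $\mu_{\overline{\arm}} \geq \mu^{*}-c\eps$ follows deterministically. The whole probabilistic content is thus pushed into bounding the complementary (failure) events.

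Next I would bound those failure probabilities via the sub-Gaussian tail (Chernoff/Hoeffding) bound on the $n = \tfrac{16}{\eps^{2}}\log(\tfrac{K}{\delta})$ independent pulls allocated to each arm. Each failure event concerns a deviation of magnitude $\tfrac{c\eps}{2}$ from the true mean, so its Chernoff exponent scales like $n\cdot(\tfrac{c\eps}{2})^{2} = \Theta\paren{c^{2}\log(\tfrac{K}{\delta})}$ --- this is precisely where the quadratic dependence $c^{2}$ in the exponent originates, and it is what produces the ``smooth-failure'' shape $(\tfrac{1}{2})^{c^{2}}\delta$. Each individual tail is therefore at most $(\tfrac{\delta}{K})^{\Theta(c^{2})}$, and a union bound over the at most $K$ arms involved (the best arm plus the bad arms) leaves only a spare leading factor of $K$.

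The main obstacle --- and the reason the constant $16$ and the $\log(\tfrac{K}{\delta})$ factor are chosen as they are --- is the bookkeeping needed to absorb both the union-bound factor $K$ and the requested $(\tfrac{1}{2})^{c^{2}}$ into the target bound, i.e. to verify $K\cdot(\tfrac{\delta}{K})^{\Theta(c^{2})} \leq (\tfrac{1}{2})^{c^{2}}\delta$ uniformly over all integers $c\geq 1$ and all $\delta,\eps\in(0,1)$. I expect this to reduce to three elementary comparisons of exponents: $\delta<1$ kills the extra powers of $\delta$, $K\geq 2$ lets the $K^{-\Theta(c^{2})}$ decay swallow the leading $K$, and $c\geq 1$ ensures the $2c^{2}$ in the exponent dominates the $c^{2}$ demanded on the right-hand side. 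This is careful constant-chasing rather than a conceptual difficulty, with the $K=1$ case handled trivially since the single arm is optimal.
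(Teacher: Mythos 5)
Your proposal is correct and follows essentially the same route as the paper's proof: split the gap $c\cdot\eps$ at the midpoint $\mu^{*}-\tfrac{c\eps}{2}$, bound the underperformance of $\armstar$ and the overperformance of each arm with true reward below $\mu^{*}-c\eps$ via a Chernoff exponent proportional to $c^{2}\log(\tfrac{K}{\delta})$, and union bound over the $K$ arms so that the spare factor of $K$ and the target $(\tfrac{1}{2})^{c^{2}}$ are absorbed. The only cosmetic difference is that you apply Hoeffding directly to the end-of-stream empirical champion, whereas the paper routes the same two deviation events through its arm-comparison lemma (\Cref{lem:arm-comp}).
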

\begin{proof}
The lemma is obtained by straightforward applications of the Chernoff bound and \Cref{lem:arm-comp}. Concretely, the arm-pulling line in the Native Uniform Elimination algorithm is equivalent to setting $S=4\log(\frac{K}{\delta})$ for each arm comparison in \Cref{lem:arm-comp}. As such, for a fixed integer $c\geq 1$, when $\armstar$ arrives, it has empirical reward
\begin{align*}
\Pr\paren{\muhat_{\armstar}< \mustar-c\cdot \frac{\eps}{2}} & \leq (\frac{1}{2})^{c^2-1}\cdot \exp(-4\log(\frac{K}{\delta}))\\
& \leq (\frac{1}{2})^{c^2+1}\cdot \frac{\delta}{K}. 
\end{align*}
As such, with probability at least $1-(\frac{1}{2})^{c^2+1}\cdot \frac{\delta}{K}$, the estimation of $\muhat^*$ eventually becomes at least $\mustar-c\cdot \frac{\eps}{2}$. On the other hand, if an $\arm_{i}$ has a reward less than $\mu^*-c\cdot \eps$ we have 
\begin{align*}
\Pr\paren{\muhat_{i}>\mu_{i}+c\cdot \frac{\eps}{2}} & \leq (\frac{1}{2})^{c^2-1}\cdot \exp(-4\log(\frac{K}{\delta}))\\
& \leq (\frac{1}{2})^{c^2+1}\cdot \frac{\delta}{K}. 
\end{align*}
And a union bound over at most $K$ arms gives us that no arm with a mean reward less than $\mu^*-c\cdot \eps$ can be stored in the end with probability at least $(\frac{1}{2})^{c^2+1}\cdot \delta$. Finally, we take a union bound over the failure probability of the aforementioned events, and conclude that with probability at least $1-(\frac{1}{2})^{c^2}\cdot \delta$, the final returned arm is with a mean reward at least $\mu^*-c\cdot \eps$.
\end{proof}

With \Cref{lem:smooth-guarantee-easy} establishing the `smooth failure probability', we can now apply \Cref{lem:smooth-failure-regret} to obtain the regret guarantee for streaming algorithms with uniform elimination. 
\begin{proposition}
\label{prop:ub-expect-easy}
There exists a single-pass streaming algorithm that given a multi-armed bandit instance arriving in a stream with fixed parameters $T$, $K$ such that $T>K$, carries out arm pulls with expected regret $\expect{R_{T}}\leq O(K^{1/3}T^{2/3}\log(K))$ and uses a memory of a single extra arm.
\end{proposition}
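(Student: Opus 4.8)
The plan is to obtain \Cref{prop:ub-expect-easy} as a direct composition of the two lemmas already in hand: I would feed the Naive Uniform Elimination algorithm, whose smooth guarantee is quantified in \Cref{lem:smooth-guarantee-easy}, into the Smooth-Failure Bounded-Regret Lemma (\Cref{lem:smooth-failure-regret}). The entire argument then reduces to checking that Naive Uniform Elimination meets the hypotheses of \Cref{lem:smooth-failure-regret} and reading off the resulting bound; there is no new analytic content beyond matching the two interfaces.

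First I would fix the confidence parameter of Naive Uniform Elimination to the \emph{constant} $\delta = \frac{1}{10}$. With this choice its arm-pull budget is $\frac{16K}{\eps^2}\log(10K)$, which I rewrite as $\frac{M}{\eps^2}$ with $M = 16K\log(10K) = O(K\log K)$; this matches the $\frac{M}{\eps^2}$ form required by \Cref{lem:smooth-failure-regret}, and the single-extra-arm space bound carries over verbatim. Next I would verify the smooth failure-probability hypothesis. By \Cref{lem:smooth-guarantee-easy} with $\delta = \frac{1}{10}$, the event $\mu_{\ALG(\instance)} < \mustar - c\eps$ has probability at most $\Paren{\frac{1}{2}}^{c^2}\cdot\frac{1}{10}$ for every integer $c \geq 1$. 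Since $c^2 \geq c$ on the integers $c \geq 1$, we have $\Paren{\frac{1}{2}}^{c^2} \leq \Paren{\frac{1}{2}}^{c}$, and therefore
\[
\Pr\Paren{\mu_{\ALG(\instance)} < \mustar - c\eps} \;\leq\; \Paren{\frac{1}{2}}^{c}\cdot\frac{1}{10},
\]
which is exactly the condition demanded by \Cref{lem:smooth-failure-regret}.

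With both hypotheses confirmed, I would invoke \Cref{lem:smooth-failure-regret} with $M = O(K\log K)$ to conclude that the expected regret is
\[
\expect{R_{T}} \leq O\Paren{M\cdot\frac{T^{2/3}}{K^{2/3}} + K^{1/3}T^{2/3}} = O(K^{1/3}T^{2/3}\log K),
\]
since the first term becomes $O(K\log K)\cdot\frac{T^{2/3}}{K^{2/3}} = O(K^{1/3}T^{2/3}\log K)$ and dominates the second. The only real subtlety—and hence the one place I would slow down—is reconciling the two notions of smooth failure: \Cref{lem:smooth-guarantee-easy} supplies a \emph{quadratic} exponent $\Paren{\frac{1}{2}}^{c^2}$, whereas \Cref{lem:smooth-failure-regret} only needs the \emph{linear} decay $\Paren{\frac{1}{2}}^{c}$, so one must observe that the quadratic bound over-satisfies the linear requirement, which is precisely what lets the constant $\delta = \frac{1}{10}$ suffice with no shrinking confidence. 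Beyond this bookkeeping I do not anticipate any genuine obstacle.
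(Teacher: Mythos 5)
Your proposal is correct and matches the paper's own proof essentially verbatim: both set $\delta = \tfrac{1}{10}$ in Naive Uniform Elimination, observe that $(\tfrac{1}{2})^{c^2} \leq (\tfrac{1}{2})^{c}$ so \Cref{lem:smooth-guarantee-easy} over-satisfies the hypothesis of \Cref{lem:smooth-failure-regret}, and plug in $S=1$, $M = 16K\log(10K)$ to read off the $O(K^{1/3}T^{2/3}\log(K))$ bound. No gaps.
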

\begin{proof}
By \Cref{lem:smooth-guarantee-easy}, we know that for any given parameter $\eps$, there is 
\begin{align*}
\Pr\paren{\mu_{\overline{\arm}} < \mu^{*} - c \cdot \eps} \leq (\frac{1}{2})^{c^2} \cdot \frac{1}{10}\leq (\frac{1}{2})^{c} \cdot \frac{1}{10}.
\end{align*}
by setting $\delta=\frac{1}{10}$. As such, we can match the parameters in \Cref{lem:smooth-failure-regret} by $S=1$ and $M=16K\log(10K)$. This gives us the desired bound of
\begin{align*}
\expect{R_{T}} \leq O\paren{M\cdot \frac{T^{2/3}}{K^{2/3}}+K^{1/3}T^{2/3}} = O(K^{1/3}T^{2/3}\log(K)).
\end{align*}
\end{proof}

\subsection{A $\logstar(K)$-arm memory algorithm with $O(K^{1/3}T^{2/3})$ expected regret}
\label{subsec:expect-ub-full}

We now proceed to our streaming algorithm with the \emph{optimal} expected regret for any streaming algorithm with $o(K)$ memory. Our optimal algorithm follows the same `exploration-and-commit' paradigm, albeit using a non-trivial streaming $\eps$-best arm algorithm recently developed by \cite{assadi2020exploration,MaitiPK21}. Note that although the design of the algorithm is straightforward, we cannot use the algorithm as a blackbox since we need to control the case when the algorithm `fails' -- and it requires a non-trivial proof.

We first give the streaming $\eps$-best arm algorithm with $\logstar(K)$ memory as follows.

\begin{tbox}
\underline{\textbf{Parameter Set 1}}: 
\begin{align*}
& \set{\eps}_{\ell\geq 1}: \eps_{\ell}=\frac{\eps}{10\cdot2^{\ell-1}} \tag{$\eps$ parameter at each level}\\
&\set{r_\ell}_{\ell\geq 1}: r_{1}:=4,\quad r_{\ell+1} = 2^{r_{\ell}}; \qquad \set{\beta_{\ell}}_{\ell\geq 1}: \beta_{\ell}=\frac{1}{\eps_{\ell}^{2}}; \tag{intermediate variables to define $s_\ell$ and $c_\ell$}\\
&\set{s_{\ell}}_{\ell\geq 1}: s_{\ell} = 8\beta_{\ell}(\ln(\frac{1}{\delta})+3r_{\ell}) \tag{number of samples per arm at each level}\\
& \set{c_{\ell}}_{\ell\geq 1} c_{1} = 2^{r_{1}},\quad  c_{\ell}=\frac{2^{r_{\ell}}}{2^{\ell-1}} (\ell \geq 2) \tag{the bound for the number of arms to `defeat' at each level}
\end{align*}
\end{tbox}

\begin{tbox}
\textbf{\underline{Aggressive Selective Promotion}} -- an $\eps$-best arm algorithm using $\log^*(K)$-arm memory  
\smallskip

Counters: $C_{1}, C_{2}, ..., C_{t} \qquad t=\left \lceil{\log^{*}(K)}\right \rceil+1$;\\
Reward records: $\mu_{1}^{*}$, $\mu_{2}^{*}$, ..., $\mu_{t}^{*}$, initialize with $0$; \\
Stored arms: $\arm^{*}_{1}, \arm^{*}_{2}, ..., \arm^{*}_{t}$ the most reward arm of $\ell$-th level.

\begin{itemize}
\item For each arriving $\arm_i$ in the stream do:
\begin{enumerate}[label=($\arabic*$)]
    \item Read $\arm_{i}$ to memory.
    \item Starting from level $\ell=1$: 
    \begin{enumerate}
        \item\label{linesample} Sample $\arm_{i}$ for $s_{\ell}$ times and get $\muhat_{\arm_{i}}$. 
        \begin{enumerate}
        \item If $\muhat_{\arm_{i}}<\mu_{\ell}^{*}$, drop $\arm_{i}$;
        \item Otherwise, replace $\arm^{*}_{\ell}$ with $\arm_{i}$ and set $\mu_{\ell}^{*}=\muhat_{\arm_{i}}$.
        \end{enumerate}
        \item Increase $C_{\ell}$ by 1.
        \item If $C_{\ell}=c_{\ell}$, do 
        \begin{enumerate}
        \item Reset the counter to $C_{\ell}=0$.
        \item Send $\arm^{*}_{\ell}$ to the next level by calling Line~\ref{linesample} with $(\ell = \ell+1)$.
        \end{enumerate}
    \end{enumerate}
\item At the end of the stream
    \begin{enumerate}
        \item For all $i\in [t]$, sample $\arm_{i}^{*}$ for $32\cdot \frac{\logstar(K)}{\eps^2}$ times and get $\muhat^{*}_{i}$.
        \item Return the arm with the highest $\muhat^{*}_{i}$.
    \end{enumerate}
\end{enumerate}
\end{itemize}
\end{tbox}

Unlike the Naive Uniform Elimination algorithm, it is not immediately clear how many arm pulls are used in the Aggressive Selective Promotion algorithm. We can nevertheless use the upper bound on arm pulls in \cite{assadi2020exploration} as a blackbox on this front.

\begin{lemma}[\cite{assadi2020exploration}]
\label{lem:arm-pulls-logstar}
The number of arm pulls used by the Aggressive Selective Promotion algorithm is $O(\frac{K}{\eps^2}\log(\frac{1}{\delta}))$.
\end{lemma}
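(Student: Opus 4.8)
The plan is to bound the total number of arm pulls by a level-by-level accounting and then show that the per-level contribution decays super-geometrically, so that the whole sum is dominated by its first term. First I would set up the count. The algorithm processes every arriving arm at level $1$, and an arm is promoted from level $\ell$ to level $\ell+1$ exactly once for every $c_\ell$ arms processed at level $\ell$ (this is precisely what the counter $C_\ell$ enforces). Hence, writing $N_\ell$ for the number of arms that ever reach level $\ell$, we have $N_1=K$ and $N_{\ell+1}\le N_\ell/c_\ell$, so $N_\ell\le K/\prod_{j=1}^{\ell-1}c_j$. Since every arm reaching level $\ell$ is sampled exactly $s_\ell$ times there, the pulls in the streaming phase total
\[
\sum_{\ell=1}^{t} N_\ell\, s_\ell \;\le\; \sum_{\ell=1}^{t} \frac{K\,s_\ell}{\prod_{j=1}^{\ell-1}c_j}, \qquad t=\ceil{\logstar(K)}+1 .
\]
The final re-sampling phase pulls each of the $t$ stored arms $32\logstar(K)/\eps^2$ times, contributing only $O((\logstar(K))^2/\eps^2)$, which is lower order than $K/\eps^2$; I would dispose of it immediately and focus on the main sum.

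The heart of the argument is to show that the terms $\Gamma_\ell:=K s_\ell/\prod_{j=1}^{\ell-1}c_j$ shrink geometrically, which I would do through the ratio $\Gamma_{\ell+1}/\Gamma_\ell=s_{\ell+1}/(c_\ell s_\ell)$. Plugging in $\beta_{\ell+1}/\beta_\ell=4$ (since $\eps_{\ell+1}=\eps_\ell/2$), the identity $r_{\ell+1}=2^{r_\ell}$, and $c_\ell=2^{r_\ell}/2^{\ell-1}=r_{\ell+1}/2^{\ell-1}$ for $\ell\ge 2$, the ratio becomes
\[
\frac{\Gamma_{\ell+1}}{\Gamma_\ell}=\frac{4\cdot 2^{\ell-1}}{r_{\ell+1}}\cdot\frac{\ln(1/\delta)+3r_{\ell+1}}{\ln(1/\delta)+3r_\ell}.
\]
The key elementary inequality is $\frac{\ln(1/\delta)+3r_{\ell+1}}{\ln(1/\delta)+3r_\ell}\le\frac{r_{\ell+1}}{r_\ell}$, which follows from $r_\ell\le 2^{r_\ell}=r_{\ell+1}$ by cross-multiplication and holds uniformly in $\delta$; this is exactly where the additive $\ln(1/\delta)$ term gets absorbed. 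Substituting yields $\Gamma_{\ell+1}/\Gamma_\ell\le 2^{\ell+1}/r_\ell$, and since $r_\ell$ grows tower-exponentially while $2^{\ell+1}$ grows only geometrically, this ratio is at most $\tfrac12$ for every $\ell\ge 2$ (and collapses to essentially $0$ from $\ell=3$ onward). The boundary ratio $\Gamma_2/\Gamma_1$ I would treat separately using $c_1=2^{r_1}=16$ and $r_1=4,\ r_2=16$, checking $\Gamma_2/\Gamma_1=\tfrac14\cdot\frac{\ln(1/\delta)+48}{\ln(1/\delta)+12}\le 1$.

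Finally I would assemble the bound: since $\Gamma_2\le\Gamma_1$ and $\Gamma_{\ell+1}\le\tfrac12\Gamma_\ell$ for $\ell\ge 2$, the tail is a convergent geometric series, so $\sum_{\ell\ge 1}\Gamma_\ell\le 3\Gamma_1=O\!\left(\frac{K}{\eps^2}(\ln(1/\delta)+1)\right)=O\!\left(\frac{K}{\eps^2}\log(1/\delta)\right)$, where the constant $r_1$-terms in $s_1$ are absorbed into the $O(\cdot)$ (and the $+1$ into $\log(1/\delta)$ for $\delta$ bounded away from $1$). Adding back the lower-order final-phase cost gives the claimed $O(\frac{K}{\eps^2}\log(\frac{1}{\delta}))$ bound. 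The main obstacle is the ratio computation itself: the parameters are engineered so that the tower-exponential promotion threshold $c_\ell\approx 2^{r_\ell}$ cancels the $r_{\ell+1}$-growth of the per-arm sample count $s_{\ell+1}$, and carrying out this cancellation cleanly while correctly handling the additive $\ln(1/\delta)$ factor and the anomalous first level is the delicate part, whereas the accounting and the geometric summation are routine.
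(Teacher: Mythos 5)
Your proof is correct, and I verified the arithmetic: the ratio $\Gamma_{\ell+1}/\Gamma_\ell = s_{\ell+1}/(c_\ell s_\ell) = \frac{4\cdot 2^{\ell-1}}{r_{\ell+1}}\cdot\frac{\ln(1/\delta)+3r_{\ell+1}}{\ln(1/\delta)+3r_\ell}$ is right, the cross-multiplication argument absorbing the $\ln(1/\delta)$ term reduces exactly to $r_\ell\le r_{\ell+1}$, the boundary check $\Gamma_2/\Gamma_1=\frac14\cdot\frac{\ln(1/\delta)+48}{\ln(1/\delta)+12}\le 1$ holds with equality only at $\delta=1$, and the tail ratio $2^{\ell+1}/r_\ell$ equals $\frac12$ at $\ell=2$ and collapses thereafter, so $\sum_\ell\Gamma_\ell\le 3\Gamma_1=O(\frac{K}{\eps^2}\log(\frac1\delta))$. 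The comparison to the paper is slightly unusual here: the paper does not prove this lemma at all. It imports the arm-pull bound from \cite{assadi2020exploration} explicitly as a blackbox, adding only the one-sentence remark that the bound holds deterministically ``because the number of arms reaching higher levels decreases in a towering number speed.'' Your argument is precisely the formalization of that remark --- the level-by-level accounting $N_\ell\le K/\prod_{j<\ell}c_j$, the cancellation of the tower-growth of $s_{\ell+1}$ against the tower-growth of $c_\ell$, and the geometric domination by the first term --- so what your proof buys is a self-contained verification that the cited bound really is deterministic and really carries the stated $\log(1/\delta)$ dependence with the paper's specific parameter set, including the lower-order $O((\logstar(K))^2/\eps^2)$ cost of the final re-sampling phase, which the paper's citation glosses over entirely.
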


Note that \Cref{lem:arm-pulls-logstar} holds deterministically without any randomness -- this is simply because of the number of arms reaching higher levels decreases in a towering number speed. On the other hand, similar to our scenario in \Cref{subsec:expect-ub-easy}, it is not immediately clear which arm the Aggressive Selective Promotion algorithm will return if it fails. To this end, we again prove a `smooth version' of success probability for the Aggressive Selective Promotion algorithm. 

\begin{lemma}
\label{lem:smooth-guarantee-logstar}
For fixed parameters $\delta\in (0,1)$, $\eps\in (0,1)$, and integer $c \geq 1$, the Aggressive Selective Promotion algorithm returns an $\arm^{*}_{t}$ with reward 
\[\mu_{\arm^{*}_{t}} \geq \mu^{*} - c \cdot \eps\] 
with probability at least $1-(\frac{1}{2})^{c^2} \cdot \delta$. 
\end{lemma}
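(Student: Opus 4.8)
The plan is to follow the template of \Cref{lem:smooth-guarantee-easy}, upgrading the single per-arm Chernoff bound to a union bound across the promotion levels. I would introduce a \emph{clean event} $\mathcal{E}_c$ asserting that every empirical mean the algorithm forms is accurate at the scale of its level: an estimate taken at level $\ell$ (using $s_\ell$ samples) lies within $c\eps_\ell$ of the true mean, and every estimate in the final phase (using $32\logstar(K)/\eps^2$ samples) lies within $c\eps/4$ of the true mean. The argument then splits into a deterministic claim --- under $\mathcal{E}_c$ the returned arm has reward at least $\mustar-c\eps$ --- and a probabilistic claim --- $\Pr(\neg\mathcal{E}_c)\leq(\frac{1}{2})^{c^2}\delta$. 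The arm-pull budget is already supplied by \Cref{lem:arm-pulls-logstar}, so only these two claims remain.

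For the deterministic claim I would track a near-optimal arm (a ``torch-bearer'') through the levels, beginning with $\armstar$ at level $1$. Under $\mathcal{E}_c$, whenever the arm stored at a level is replaced, the incoming arm beats the stored one empirically, and since both empirical means are within $c\eps_\ell$ of their true values, the incoming arm's true reward drops by at most $2c\eps_\ell$. Hence the torch-bearer's reward falls by at most $\sum_{\ell\geq 1}2c\eps_\ell=\frac{2c\eps}{10}\sum_{\ell\geq1}2^{-(\ell-1)}\leq\frac{2c\eps}{5}$ in total, so at the end of the stream at least one stored arm among $\arm^{*}_{1},\dots,\arm^{*}_{t}$ has reward at least $\mustar-\frac{2c\eps}{5}$. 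The final phase re-samples all $t=\lceil\logstar(K)\rceil+1$ stored arms and returns the empirically best; under $\mathcal{E}_c$ this costs at most another $\frac{c\eps}{2}$ in true reward, keeping the total gap below $c\eps$.

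For the probabilistic claim I would bound $\Pr(\neg\mathcal{E}_c)$ by a union bound, and this is where the quadratic-in-$c$ exponent that produces the $(\frac{1}{2})^{c^2}$ smoothness appears. A level-$\ell$ estimate uses $s_\ell=8\beta_\ell(\ln(1/\delta)+3r_\ell)$ samples with $\beta_\ell\eps_\ell^2=1$, so by the Chernoff bound it deviates by more than $c\eps_\ell$ with probability at most $\exp(-4c^2(\ln(1/\delta)+3r_\ell))=\delta^{4c^2}e^{-12c^2 r_\ell}$. The decisive point is that only the $O(c_\ell)$ arms sharing the torch-bearer's batch at level $\ell$ need be accurate there, so the level-$\ell$ contribution is at most $c_\ell\cdot\delta^{4c^2}e^{-12c^2 r_\ell}$. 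Since $c_\ell\leq2^{r_\ell}$ and $2^{r_\ell}e^{-12c^2 r_\ell}\leq e^{-11c^2 r_\ell}$ for $c\geq1$, the sum over levels is dominated by its $\ell=1$ term ($r_1=4$), totalling $O(\delta^{4c^2}e^{-44c^2})$ \emph{independently of $K$}, which is at most $\frac{1}{2}(\frac{1}{2})^{c^2}\delta$. The final phase adds a union bound over the $t=O(\logstar(K))$ stored arms, each deviating by $c\eps/4$ with probability $e^{-\Theta(c^2\logstar(K))}$, contributing the remaining $\frac{1}{2}(\frac{1}{2})^{c^2}\delta$; summing gives the claim.

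The main obstacle is making the level-$\ell$ union bound free of $K$ even though level $1$ processes all $K$ arms, while the torch-bearer chain is chosen \emph{adaptively} by the random samples. I would resolve this exactly as in the correctness analysis of \cite{assadi2020exploration,MaitiPK21}: the event that the torch-bearer is replaced by an arm more than $2c\eps_\ell$ worse is contained in the event that one of the at most $c_\ell$ estimates in its batch, or its own stored estimate, deviates by more than $c\eps_\ell$, and this per-level bound $c_\ell\cdot\delta^{4c^2}e^{-12c^2 r_\ell}$ holds \emph{uniformly} over which arms occupy the batch, so the adaptivity is harmless. Any bad arm that nonetheless lands in some $\arm^{*}_{\ell}$ is then screened out in the final phase, which ranges over only $O(\logstar(K))$ candidates. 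This is precisely what the towering schedule $r_{\ell+1}=2^{r_\ell}$ together with $c_\ell\approx 2^{r_\ell}$ and $s_\ell\propto r_\ell/\eps_\ell^2$ is engineered to deliver, so the delicate step is verifying that the uniform per-level bound survives the adaptive choice of chain rather than deriving any new inequality.
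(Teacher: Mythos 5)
Your proposal follows essentially the same route as the paper's proof: a per-level Chernoff bound at accuracy scale $c\eps_\ell$, a union bound over only the $c_\ell$ arms in the torch-bearer's batch (made $K$-independent by the towering schedule $r_{\ell+1}=2^{r_\ell}$, $c_\ell\approx 2^{r_\ell}$, $s_\ell\propto r_\ell/\eps_\ell^2$), and a geometric summation of the per-level losses $O(c\eps_\ell)$ to a total below $c\eps$ — which is exactly the paper's ``best surviving arm of level $\ell$'' chaining argument. The only substantive difference is that you also budget error for the end-of-stream re-sampling over the $t$ stored arms, a step the paper's proof silently skips; that is a welcome addition (though note the final phase's sample count $32\logstar(K)/\eps^2$ carries no $\log(1/\delta)$ factor, so your claimed $\tfrac{1}{2}(\tfrac12)^{c^2}\delta$ contribution for that phase only holds for constant $\delta$, which is all that is needed in \Cref{prop:ub-expect-logstar}).
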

\begin{proof}
Fix a level $\ell$, we define the surviving arms of level $\ell$ as the set of arms that can ever reach $\ell$, and let the corresponding mean reward be $\mu_{\ell}$ (pending the randomness of the arms). Our strategy is to argue that with probability at least $\paren{1-(\frac{1}{2})^{c^{2}+2\ell} \cdot \delta}$, the best arm among the surviving arms of level $\ell$ can only be replaced by an arm with mean reward at least $\mu_{\ell} - c\cdot \eps_{\ell}$. Since $\armstar$ is trivially the best arm among the surviving arms of level $1$, this allows us to guarantee the cumulative gap as a summation of $c\dot \eps_{\ell}$ across levels -- a series that converges $c\cdot \eps$.

We now formalize the above strategy. We first show at any level $\ell$, the value of the `benchmark' $\mustar_{\ell}$ does not go below $\mu_{\ell}-\frac{c}{2}\cdot \eps_{\ell}$ with probability at least $\paren{1-(\frac{1}{2})^{c^{2}+3r_{\ell}} \cdot \delta}$. To see this, note that by an application of \Cref{lem:arm-comp}, for any arm with mean reward $\mu$, there is 
\begin{align*}
	\Pr\paren{\muhat \leq \mu - c \cdot \eps_{\ell}/2} &  \leq \exp\paren{- 2c^2 \cdot (\log(\frac{1}{\delta})+3r_{\ell})} \\
	\tag{arm is pulled $s_{\ell} = 8\beta_{\ell}(\ln(\frac{1}{\delta})+3r_{\ell})$ times}\\
	&\leq (\frac{1}{2})^{c^{2}+3r_{\ell}} \cdot \delta.
\end{align*}
As such, let $\mu_{\ell}$ be the mean reward of the best surviving arm of level $\ell$, the empirical reward for $\mu_{\ell}$ is at least $\mu_{\ell}-\frac{c}{2}\cdot \eps_{\ell}$. Suppose the value of $\mustar_{\ell}$ (the benchmark reward) is less than $\mu_{\ell}-\frac{c}{2}\cdot \eps_{\ell}$; then, when $\mu_{\ell}$ joins level $\ell$, the benchmark is updated to the value with probability at least $1-(\frac{1}{2})^{c^{2}+3r_{\ell}} \cdot \delta$.

We then show that at any level $\ell$, any arm with reward less than $\mu_{\ell}-\eps_{\ell}$ can have a empirical reward of at most $\mu_{\ell}-\frac{\eps_{\ell}}{2}$ with probability $1-(\frac{1}{2})^{c^{2}+2r_{\ell}} \cdot \delta$, again by a straightforward application of \Cref{lem:arm-comp}. For an arm with reward $\mu$, there is
\begin{align*}
	\Pr\paren{\muhat \geq \mu + c \cdot \eps_{\ell}/2} &  \leq \exp\paren{- 2c^2 \cdot (\log(\frac{1}{\delta})+3r_{\ell})} \\
	\tag{arm is pulled $s_{\ell} = 8\beta_{\ell}(\ln(\frac{1}{\delta})+3r_{\ell})$ times}\\
	&\leq (\frac{1}{2})^{c^{2}+3r_{\ell}} \cdot \delta.
\end{align*}
As such, we can apply a union bound over the bad events, and obtain that
\begin{align*}
\Pr\paren{\muhat \geq \mu + c \cdot \eps_{\ell}/2 \text{ for any arm on level $\ell$}} &\leq c_{\ell}\cdot (\frac{1}{2})^{c^{2}+3r_{\ell}} \cdot \delta \leq (\frac{1}{2})^{c^{2}+2r_{\ell}} \cdot \delta.
\end{align*}
For any integer $c$, we now have the following statement: by a union bound, with probability at least \[1-\paren{(\frac{1}{2})^{c^{2}+2r_{\ell}} + (\frac{1}{2})^{c^{2}+3r_{\ell}})}\cdot \delta \geq 1-(\frac{1}{2})^{c^{2}+2\ell} \cdot \delta,\]
the benchmark reward on level $\ell$ is at least $\mu_{\ell}-c \cdot \frac{\eps_{\ell}}{2}$, and an arm with such an empirical reward has to have a mean reward of at least $\mu_{\ell}-c \cdot \eps_{\ell}$. Therefore, we conclude that at a fixed level $\ell$ and for any integer $c$, the best $\armstar_{\ell}$ has to have a mean reward at least $\mu_{\ell}-c \cdot \eps_{\ell}$ with probability at least $1-(\frac{1}{2})^{c^{2}+2\ell} \cdot \delta$. We define this high-probability event at level $\ell$ as $\eveepsell$.

Finally, we handle the accumulation of error and failure probability across levels. Note that the failure probability across different levels can be bounded by
\begin{align*}
\Pr\paren{\neg \eveepsell \text{ at any level $\ell$}} &\leq \sum_{\ell=1}^{t} (\frac{1}{2})^{c^{2}+2\ell}\cdot \delta\\
&\leq (\frac{1}{2})^{c^{2}}\delta \sum_{\ell=1}^{\infty} (\frac{1}{2})^{2\ell}\\
&\leq (\frac{1}{2})^{c^{2}} \cdot \delta.
\end{align*}

Conditioning on the high probability event over all levels of $\ell$, the cumulative gap between the best surviving arm on level $1$ (which is $\armstar$) and on level $t$ is at most
\begin{align*}
\sum_{\ell=1}^{t} c \cdot \eps_{\ell} & = c \cdot \sum_{\ell=1}^{\infty} \eps_{\ell}\\
& \leq c \cdot \frac{\eps}{30} \sum_{\ell=1}^{\infty} \frac{1}{2^{\ell-1}}\\
&\leq  c \cdot \eps,
\end{align*}
as desired by the lemma statement.
\end{proof}

We can now arrive at our main $\logstar(K)$-memory regret minimization algorithm by combining \Cref{lem:arm-pulls-logstar,lem:smooth-guarantee-logstar,lem:smooth-failure-regret}. 
\begin{theorem}
\label{prop:ub-expect-logstar}
There exists a single-pass streaming algorithm that given a multi-armed bandit instance arriving in a stream with fixed parameters $T$, $K$ such that $T>K$, carries out arm pulls with expected regret $\expect{R_{T}}\leq O(K^{1/3}T^{2/3})$ and uses a memory of $\left \lceil{\log^{*}(K)}\right \rceil+1$ arms.
\end{theorem}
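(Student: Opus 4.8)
The plan is to instantiate the exploration-and-commit template of \Cref{lem:smooth-failure-regret} using the Aggressive Selective Promotion algorithm as the $\eps$-best arm subroutine, so that the theorem reduces to checking that the three ingredients already established — the arm-pull count (\Cref{lem:arm-pulls-logstar}), the smooth-failure guarantee (\Cref{lem:smooth-guarantee-logstar}), and the algorithm's space usage — fit the hypotheses of \Cref{lem:smooth-failure-regret}.

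First I would fix the confidence parameter to the constant $\delta=\frac{1}{10}$ and set $\eps=(\frac{K}{T})^{1/3}$, running the Aggressive Selective Promotion algorithm in an exploration phase and then committing to its returned arm $\arm^{*}_{t}$ for every remaining trial. The space bound is immediate from the algorithm's description: it maintains $t=\ceil{\log^{*}(K)}+1$ stored arms together with $O(\log^{*}(K))$ scalar counters and reward records, so the memory is $\ceil{\log^{*}(K)}+1$ arms, as claimed.

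Next I would verify the two numerical hypotheses of \Cref{lem:smooth-failure-regret}. For the arm-pull bound, \Cref{lem:arm-pulls-logstar} gives $O(\frac{K}{\eps^2}\log(\frac{1}{\delta}))$ pulls; with $\delta=\frac{1}{10}$ held constant this has the required form $\frac{M}{\eps^2}$ with $M=O(K)$. For the smooth-failure condition, \Cref{lem:smooth-guarantee-logstar} yields $\Pr(\mu_{\arm^{*}_{t}}<\mu^{*}-c\cdot\eps)\leq (\frac{1}{2})^{c^2}\cdot\delta$; since $c^2\geq c$ for every integer $c\geq 1$, this is bounded by $(\frac{1}{2})^{c}\cdot\frac{1}{10}$, exactly the tail required by \Cref{lem:smooth-failure-regret}.

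Finally, substituting $M=O(K)$ into the conclusion of \Cref{lem:smooth-failure-regret} gives $\expect{R_{T}}\leq O(K\cdot\frac{T^{2/3}}{K^{2/3}}+K^{1/3}T^{2/3})=O(K^{1/3}T^{2/3})$, the desired bound. I do not anticipate a real obstacle at this stage: the genuine difficulty — proving the Gaussian-type $(\frac{1}{2})^{c^2}$ smooth tail for a promotion scheme whose level sizes shrink at a towering rate — was already handled in \Cref{lem:smooth-guarantee-logstar}. The only care needed here is bookkeeping: choosing $\delta$ to be a constant so that $M$ depends on $K$ alone and not on $T$ (otherwise an extra $\log(T)$ factor would leak into the regret, exactly the loss the paper is trying to avoid), and recording the elementary inequality $c^2\geq c$ that lets the tail of \Cref{lem:smooth-guarantee-logstar} feed directly into \Cref{lem:smooth-failure-regret}.
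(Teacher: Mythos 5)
Your proposal is correct and follows essentially the same route as the paper's proof: setting $\delta=\frac{1}{10}$, invoking \Cref{lem:smooth-guarantee-logstar} with the observation $c^2\geq c$ to satisfy the tail hypothesis, and plugging $S=\ceil{\log^{*}(K)}+1$ and $M=O(K)$ from \Cref{lem:arm-pulls-logstar} into \Cref{lem:smooth-failure-regret}. The only addition beyond the paper's text is your explicit remark about keeping $\delta$ constant so that $M$ does not pick up a $T$-dependence, which is a sensible clarification rather than a deviation.
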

\begin{proof}
By \Cref{lem:smooth-guarantee-logstar}, for any given parameter $\eps$, there is 
\begin{align*}
\Pr\paren{\mu_{\armstar_{t}} < \mu^{*} - c \cdot \eps} \leq (\frac{1}{2})^{c^2} \cdot \frac{1}{10}\leq (\frac{1}{2})^{c} \cdot \frac{1}{10}.
\end{align*}
by setting $\delta=\frac{1}{10}$. As such, we can match the parameters in \Cref{lem:smooth-failure-regret} by $S=\left \lceil{\log^{*}(K)}\right \rceil+1$ and $M=O(K)$ as in \Cref{lem:arm-pulls-logstar}. This gives us the desired bound of
\begin{align*}
\expect{R_{T}} \leq O\paren{M\cdot \frac{T^{2/3}}{K^{2/3}}+K^{1/3}T^{2/3}} = O(K^{1/3}T^{2/3}),
\end{align*}
which is asymptotically optimal for any streaming algorithm with $o(K)$-arm memory.
\end{proof}

\begin{remark}
\label{rmk:add-algs}
In \cite{assadi2020exploration}, there are additional algorithms with $\log(K)$- and $\log\log(K)$-arm memory that find $\eps$-best arms with $O(\frac{K}{\eps^2})$ arm pulls. Since they follow the same paradigm to apply concentration bounds as in \emph{Aggressive Selective Promotion}, it can be shown that they can also be converted to regret minimization algorithms with the \emph{optimal expected regret}. We provide their algorithmic description in \Cref{subsec:add-algs} without proofs since they are very similar to \Cref{lem:smooth-guarantee-logstar}. We remark that although the memory bounds are worse, for practical implementation, their regret could be smaller than the Aggressive Selective Promotion, and the memory difference is not significant up to $10^{10}$ arms. We will see more on this in \Cref{sec:simulation}.
\end{remark}

\paragraph{A discussion about the single-arm algorithm.} One may naturally wonder whether we can achieve a single-arm memory by the smooth-failure bounded regret lemma -- after all, we are using known algorithms, and the main innovation lies in the analysis. Alas, it appears that at least the single-arm algorithm in \cite{assadi2020exploration} does not follow the property. At a high level, the single-arm algorithm (and a variate that stored $2$ arms, both known as $\algarm$) in \cite{assadi2020exploration} uses the ideas of $(i).$ a multi-level challenge with a geometrically increasing number of arm pulls, and $(ii)$ a ``budget'' the number of arm pulls that is used for a stored arm. They proved that if the stored arm is sufficiently good, say it is the best arm, then with probability at least $99/100$ (or some arbitrary $1-\delta$ by paying $\log(1/\delta)$), the number of arm pulls we used will never exceed a (varying) budget. As such, we can discard an arm whose arriving ``challengers'' uses a large number of arm pulls if we only want to find the best arm with high constant probability.


However, for the expected regret minimization task, with probability $\sim 1/100$, the best arm can actually be discarded, and the algorithm may return an arbitrary arm. One can think of an adversarial instance that uses a considerable number of arms with suboptimal yet ''high enough’’ rewards that ``almost exhaust’’ the sample bound of the stored best arm; then a very bad arm (say with reward 0.0001) comes but still manages to break the sample budget with a small constant probability. Now, the algorithm may commit to this arm, and the expected regret becomes at least $T>> K$. Therefore, it is not immediately clear whether we can achieve $O(1)$-arm for the expected regret minimization in a single pass, and it is an interesting direction to pursue.

\section{Implementation and Simulation Results}
\label{sec:simulation}
In this section, we show the empirical evaluation of our algorithms under simulations on Bernoulli arms. In particular, we implemented and tested the uniform exploration algorithm, the naive uniform elimination algorithm, the algorithms from $\eps$-best arm with $O(\log(K))$, $O(\log\log(K))$ and $O(\logstar(K))$ memory, and the $2$-arm $\algarm$ algorithm as in \cite{assadi2020exploration}. The uniform exploration algorithm is used as the benchmark as it is the known best regret minimization algorithm with provable guarantees in a single pass.

Our simulation results find that the proposed algorithm in this paper outperforms the baseline \emph{by a significant margin}. Under all of our setting (each with $50$ runs), there is at least one $\eps$-best arm-based algorithm that achieves 80\% of the benchmark regret on average and 70\% on median, and the margin can be as significant as 70\% of the benchmark when $T$ is large (i.e. ~30\% regret of the benchmark). Across different settings, the best algorithm (the $O(\log\log(K))$-space algorithm) outperforms the uniform exploration algorithm by around 30\% of the mean regret (i.e. 70\% of the benchmark mean regret) and $>50\%$ of the median regret (i.e. $<50\%$ of the benchmark mean regret), while all the $\eps$-best arm-based algorithm outperforms the uniform exploration and the naive elimination in most cases. Interestingly, the $2$-arm $\algarm$ algorithm offers competitive performances, despite being theoretically sub-optimal in (worst-case instance) expected regret. 

\subsection{Simulation and Experiment Settings}
\label{subsec:exp-settings}
We test the algorithms for arms with Bernoulli reward distributions. If the mean of the reward is $\mu$, to simulate a pull of a Bernoulli arm, it suffices to draw a uniform at random sample from $[0,1]$ and see if it is below $\mu$ \footnote{Due to limited computational power, when the number of arm pulls is large, e.g. $>10^5$, we approximate the arm pull result by directly adding a Gaussian noise to $\mu$.}. We construct the stream of arms as a buffer, and the buffer can feed arms to the algorithm whenever needed. In particular, we test two types of streams:
\begin{enumerate}
\item The \emph{uniform reward} setting: all the rewards of the arms are generated uniformly at random from $(0,1)$.
\item The \emph{standout} setting: there is one arm with mean reward $\mu=0.82$, and all other arms are with mean reward $\mu$ drawn from a truncated Gaussian distribution with mean $0.5$ and upper tail cutting-off at $0.8$.
\end{enumerate}
The stream is then ordered randomly by the buffer before it is fed into the algorithms.

We consider the number of arms with $K=500$, $K=5000$, and $K=50000$. In each case, we further consider different number of arm pulls: $T=1000K$, $T=1000K^2$, and $T=1000 K^3$. In the implementation of different algorithms, we keep the leading constant to be $1$ (i.e. we treat $O(\cdot)$ operation as with multiplicative factor of $1$) except for multiplicative factor in the multi-level increment of samples (which we use $1.2$ instead since it has to be $>1$). We also keep the same $\eps$ across levels (as opposed to using $\frac{\eps}{2^{\ell}}$) since the number of levels is small in our experiments. The simulations are all carried on a personal device with Apple M1 chip and 8GB memory, and each setting contains $50$ runs with \emph{fixed} random seeds from $0$ to $49$ for reproducibility.

\subsection{Simulation Results}
\label{subsec:exp-results}
We report the simulation results for each \emph{number of arms} and \emph{type of stream} settings separately, and merge the other factors into separate tables and plots, respectively. The regrets in the tables are in the \emph{relative} scale, i.e., we treat the regret of the uniform exploration algorithm as the benchmark ($1.0$), and compute the relative regrets of other algorithms. Tables tables \ref{tab:K500-uniform} to \ref{tab:K50000-uniform} summarize the mean and median regrets of the uniform reward setting; and Tables \ref{tab:K500-standout} to \ref{tab:K50000-standout} give the mean and median regrets of the \emph{standout} streaming setting, where there is an arm whose reward is much better than others.

\begin{table}[!h]
\centering
\captionsetup{justification=centering}
\caption{\label{tab:K500-uniform}The comparison of the relative regret for different algorithms under setting $K=500$ uniform stream setting.}
\begin{tabular}{|l|P{2cm}|P{1.9cm}|P{1.5cm}|P{1.5cm}|P{1.5cm}|P{1.5cm}|}
\hline
 & Uniform Exploration & Naive Elimination & $\log(K)$ $\eps$-best & $\log\log(K)$ $\eps$-best & $\logstar(K)$ $\eps$-best & Game-of-Arms\\ \hline
\multicolumn{7}{|l|}{Mean Regret} \\ \hline
$T=1000K$ & 1.0 & 2.5732 & 0.5068 & 0.43331 & 0.6790 & 1.1131 \\ \hline
$T=1000K^2$ & 1.0 & 2.3139 & 0.4242 & 0.3670 & 0.6828 & 0.9793 \\ \hline
$T=1000 K^3$ & 1.0 & 1.9321 & 0.8298 & 0.6504 & 0.6411 & 0.9693 \\ \hline
\multicolumn{7}{|l|}{Median Regret} \\ \hline
$T=1000K$ & 1.0 & 2.5816 & 0.5000 & 0.4359 & 0.6757 & 1.1111 \\ \hline
$T=1000K^2$ & 1.0 & 2.3153 & 0.4023 & 0.3604 & 0.6095 & 0.9768 \\ \hline
$T=1000 K^3$ & 1.0 & 2.106 & 0.3941 & 0.3403 & 0.5225 & 0.8701 \\ \hline
\end{tabular}
\end{table}

\begin{table}
\centering
\captionsetup{justification=centering}
\caption{\label{tab:K5000-uniform}The comparison of the relative regret for different algorithms under setting $K=5000$ uniform stream setting.}
\begin{tabular}{|l|P{2cm}|P{1.9cm}|P{1.5cm}|P{1.5cm}|P{1.5cm}|P{1.5cm}|}
\hline
 & Uniform Exploration & Naive Elimination & $\log(K)$ $\eps$-best & $\log\log(K)$ $\eps$-best & $\logstar(K)$ $\eps$-best & Game-of-Arms\\ \hline
\multicolumn{7}{|l|}{Mean Regret} \\ \hline
$T=1000K$ & 1.0 & 3.3398 & 0.4650 & 0.4291 & 0.6328 & 1.0713 \\ \hline
$T=1000K^2$ & 1.0 & 3.1102 & 0.3840 & 0.4773 & 0.6728 & 0.9241 \\ \hline
$T=1000 K^3$ & 1.0 & 2.1362 & 0.5515 & 0.4880 & 0.5639 & 1.0329 \\ \hline
\multicolumn{7}{|l|}{Median Regret} \\ \hline
$T=1000K$ & 1.0 & 3.3684 & 0.4653 & 0.4293 & 0.6379 & 1.0718 \\ \hline
$T=1000K^2$ & 1.0 & 3.0815 & 0.3838 & 0.4285 & 0.6276 & 0.9236 \\ \hline
$T=1000 K^3$ & 1.0 & 2.3269 & 0.4429 & 0.3438 & 0.4334 & 0.9293 \\ \hline
\end{tabular}
\end{table}


\begin{table}[!h]
\centering
\captionsetup{justification=centering}
\caption{\label{tab:K50000-uniform}The comparison of the relative regret for different algorithms under setting $K=50000$ uniform stream setting.}
\begin{tabular}{|l|P{2cm}|P{1.9cm}|P{1.5cm}|P{1.5cm}|P{1.5cm}|P{1.5cm}|}
\hline
 & Uniform Exploration & Naive Elimination & $\log(K)$ $\eps$-best & $\log\log(K)$ $\eps$-best & $\logstar(K)$ $\eps$-best & Game-of-Arms\\ \hline
\multicolumn{7}{|l|}{Mean Regret} \\ \hline
$T=1000K$ & 1.0 & 3.7355 & 0.4274 & 0.4000 & 0.6012 & 1.0290 \\ \hline
$T=1000K^2$ & 1.0 & 3.0423 & 0.5989 & 0.4374 & 0.5733 & 1.2976 \\ \hline
$T=1000 K^3$ & 1.0 & 2.8652 & 0.5686 & 0.5117 & 0.5982 & 1.0960 \\ \hline
\multicolumn{7}{|l|}{Median Regret} \\ \hline
$T=1000K$ & 1.0 & 3.7555 & 0.4264 & 0.3994 & 0.6036 & 1.0008 \\ \hline
$T=1000K^2$ & 1.0 & 3.1974 & 0.5525 & 0.3776 & 0.5713 & 1.1953 \\ \hline
$T=1000 K^3$ & 1.0 & 3.0008 & 0.4393 & 0.3789 & 0.5142 & 0.9996 \\ \hline
\end{tabular}
\end{table}

\begin{table}[!h]
\centering
\captionsetup{justification=centering}
\caption{\label{tab:K500-standout}The comparison of the relative regret for different algorithms under setting $K=500$ standout stream setting.}
\begin{tabular}{|l|P{2cm}|P{1.9cm}|P{1.5cm}|P{1.5cm}|P{1.5cm}|P{1.5cm}|}
\hline
 & Uniform Exploration & Naive Elimination & $\log(K)$ $\eps$-best & $\log\log(K)$ $\eps$-best & $\logstar(K)$ $\eps$-best & Game-of-Arms\\ \hline
\multicolumn{7}{|l|}{Mean Regret} \\ \hline
$T=1000K$ & 1.0 & 2.4357 & 0.6090 & 0.5606 & 0.7883 & 1.2690 \\ \hline
$T=1000K^2$ & 1.0 & 2.3154 & 1.0294 & 0.5368 & 0.7982 & 0.9398 \\ \hline
$T=1000 K^3$ & 1.0 & 2.1100 & 1.9985 & 0.3553 & 1.7054 & 0.8556 \\ \hline
\multicolumn{7}{|l|}{Median Regret} \\ \hline
$T=1000K$ & 1.0 & 2.6234 & 0.4762 & 0.4493 & 0.7530 & 1.1208 \\ \hline
$T=1000K^2$ & 1.0 & 2.3154 & 0.4195 & 0.3915 & 0.6476 & 0.9425 \\ \hline
$T=1000 K^3$ & 1.0 & 2.1100 & 0.3778 & 0.3545 & 0.5900 & 0.8595 \\ \hline
\end{tabular}
\end{table}

\begin{table}
\centering
\captionsetup{justification=centering}
\caption{\label{tab:K5000-standout}The comparison of the relative regret for different algorithms under setting $K=5000$ standout stream setting.}
\begin{tabular}{|l|P{2cm}|P{1.9cm}|P{1.5cm}|P{1.5cm}|P{1.5cm}|P{1.5cm}|}
\hline
 & Uniform Exploration & Naive Elimination & $\log(K)$ $\eps$-best & $\log\log(K)$ $\eps$-best & $\logstar(K)$ $\eps$-best & Game-of-Arms\\ \hline
\multicolumn{7}{|l|}{Mean Regret} \\ \hline
$T=1000K$ & 1.0 & 2.9354 & 0.7552 & 0.6405 & 0.7705 & 1.2104 \\ \hline
$T=1000K^2$ & 1.0 & 2.9551 & 0.8746 & 0.4686 & 0.6112 & 0.8653 \\ \hline
$T=1000 K^3$ & 1.0 & 2.6700 & 9.1241 & 4.3835 & 1.7245 & 0.7826 \\ \hline
\multicolumn{7}{|l|}{Median Regret} \\ \hline
$T=1000K$ & 1.0 & 3.1183 & 0.8236 & 0.6978 & 0.8849 & 1.2310 \\ \hline
$T=1000K^2$ & 1.0 & 2.9551 & 0.3994 & 0.3728 & 0.6122 & 0.8656 \\ \hline
$T=1000 K^3$ & 1.0 & 2.6700 & 0.3622 & 0.3392 & 0.5508 & 0.7835 \\ \hline
\end{tabular}
\end{table}


\begin{table}
\centering
\captionsetup{justification=centering}
\caption{\label{tab:K50000-standout}The comparison of the relative regret for different algorithms under setting $K=50000$ standout stream setting.}
\begin{tabular}{|l|P{2cm}|P{1.9cm}|P{1.5cm}|P{1.5cm}|P{1.5cm}|P{1.5cm}|}
\hline
 & Uniform Exploration & Naive Elimination & $\log(K)$ $\eps$-best & $\log\log(K)$ $\eps$-best & $\logstar(K)$ $\eps$-best & Game-of-Arms\\ \hline
\multicolumn{7}{|l|}{Mean Regret} \\ \hline
$T=1000K$ & 1.0 & 3.0179 & 0.6323 & 0.5341 & 0.7001 & 1.0888 \\ \hline
$T=1000K^2$ & 1.0 & 3.5402 & 0.8447 & 0.5144 & 0.5750 & 1.5099 \\ \hline
$T=1000 K^3$ & 1.0 & 3.1806 & 20.4462 & 0.3182 & 0.5162 & 0.7344 \\ \hline
\multicolumn{7}{|l|}{Median Regret} \\ \hline
$T=1000K$ & 1.0 & 3.0429 & 0.6213 & 0.5421 & 0.7270 & 1.0658 \\ \hline
$T=1000K^2$ & 1.0 & 3.5402 & 0.3780 & 0.3542 & 0.5752 & 0.8194 \\ \hline
$T=1000 K^3$ & 1.0 & 3.1806 & 0.3398 & 0.3185 & 0.5156 & 0.7337 \\ \hline
\end{tabular}
\end{table}


\FloatBarrier

From the tables, it can be observed that the $\eps$-best arm-based algorithms consistent outperform the benchmark uniform exploration. The naive elimination algorithm, on the other hand, offers generally poor performances -- since we only test the number of trials for as large as $1000 K^3$ due to limited computational resource, the term $(\log(T))^{1/3}$ is still much smaller than $\log(K)$. Testing trials with even larger scale will probably help the naive elimination algorithm to catch up in the performance.

Among the $\eps$-best arm algorithms, it appears that the $\log\log(K)$-space algorithm consistently achieve the best mean and median regrets. The $\log(K)$-space algorithm is somehow unstable and offers much worse mean regret in the $K=5000$ and $K=50000$ standalone stream settings. It nonetheless consistently achieves much better median regrets. We suspect this is due to the success probability not sufficiently high, and the algorithm sometimes fails to capture an $\eps$-best arm and commit all remaining trials to a `wrong' arm\footnote{It is likely that this problem can be fixed by a heuristic search for the constant on the $\log(K)$-space algorithm. However, we do not pursue this direction in this paper.}. This also explains why the performance of the $\log(K)$-space algorithm does \emph{not} become worse in the uniform mean-reward setting. Interestingly, the $\algarm$ algorithm in \cite{assadi2020exploration} offers better performance than the naive elimination algorithm, although theoretically, there is a $\frac{\log(K)}{\eps^3}$ term on the sample complexity, which translates into $(T/K)\cdot \log(K)$ regret -- a worse regret bound when $T$ is very large.

We further provides figures of the regrets with the error bars, showing the fluctuations of regrets in each setting in more details \footnote{In the figures, we use $\log(n)$, $\log\log(n)$, and $\logstar(n)$ (using notation of $n$ instead of $K$) in the type of algorithms to keep consistent with original description of algorithms in the pure exploration context. }. Since there are some huge gaps between the regrets with different algorithms, we use $\log_{10}(\cdot)$ scale for the regret.

\begin{figure}
\centering
\includegraphics[width=1.0\textwidth]{./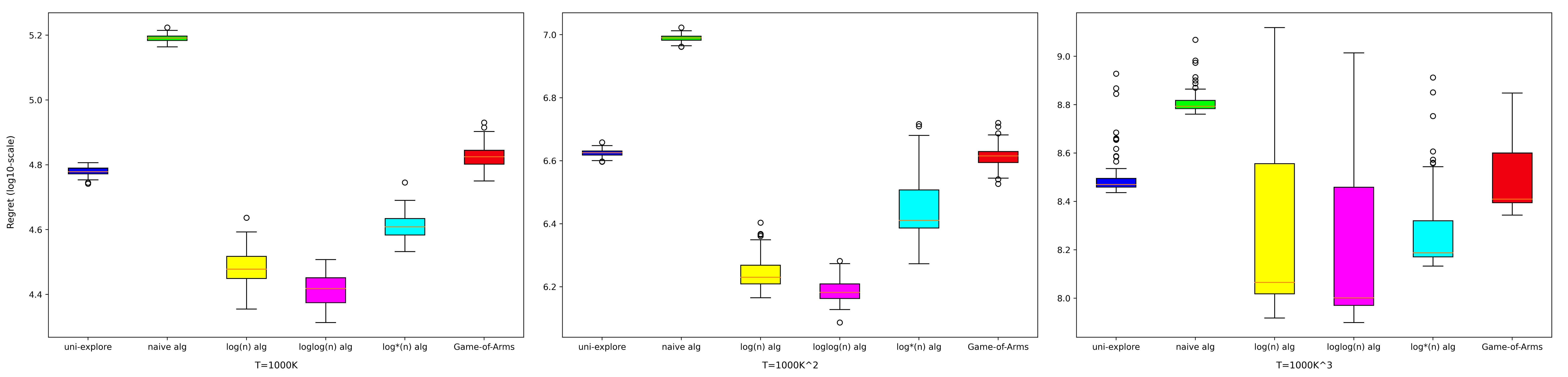}
\caption{\label{fig:K500uniform}The regret error bars for $K=500$ uniform reward setting of the stream.}
\end{figure}

\begin{figure}
\centering
\includegraphics[width=1.0\textwidth]{./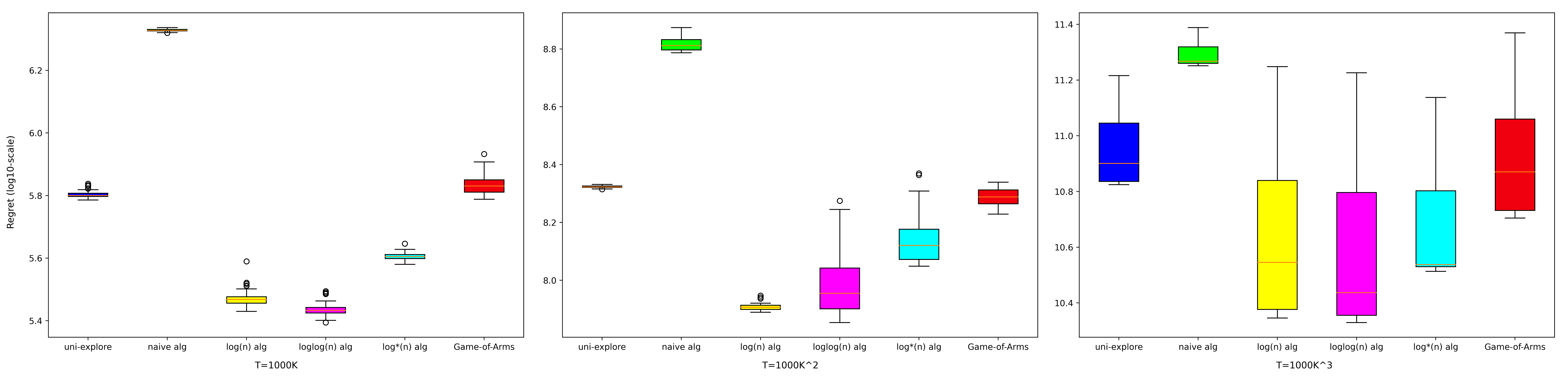}
\caption{\label{fig:K5000uniform}The regret error bars for $K=5000$ uniform reward setting of the stream.}
\end{figure}


\begin{figure}
\centering
\includegraphics[width=1.0\textwidth]{./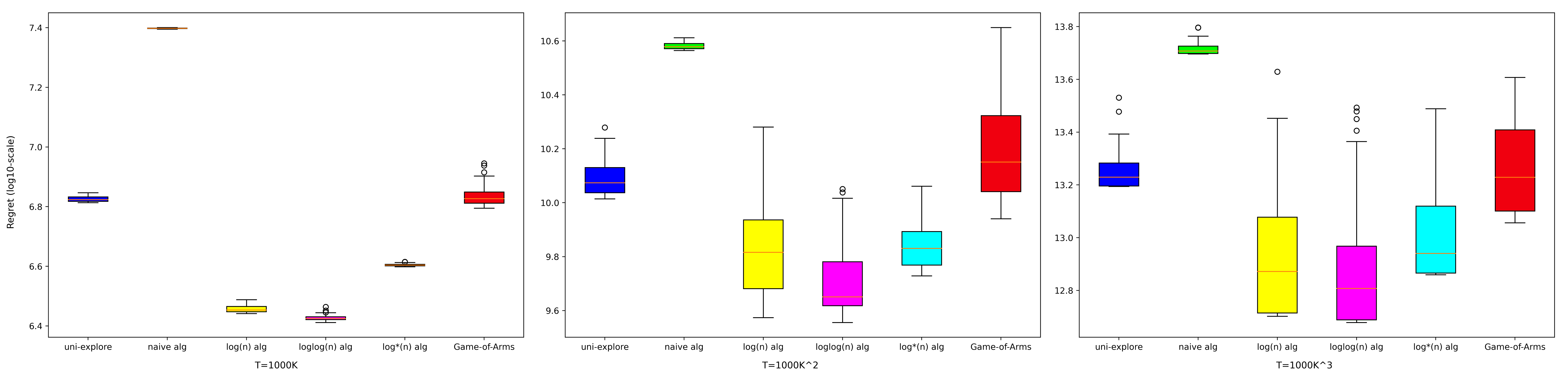}
\caption{\label{fig:K50000uniform}The regret error bars for $K=50000$ uniform reward setting of the stream.}
\end{figure}

\begin{figure}[h]
\centering
\includegraphics[width=1.0\textwidth]{./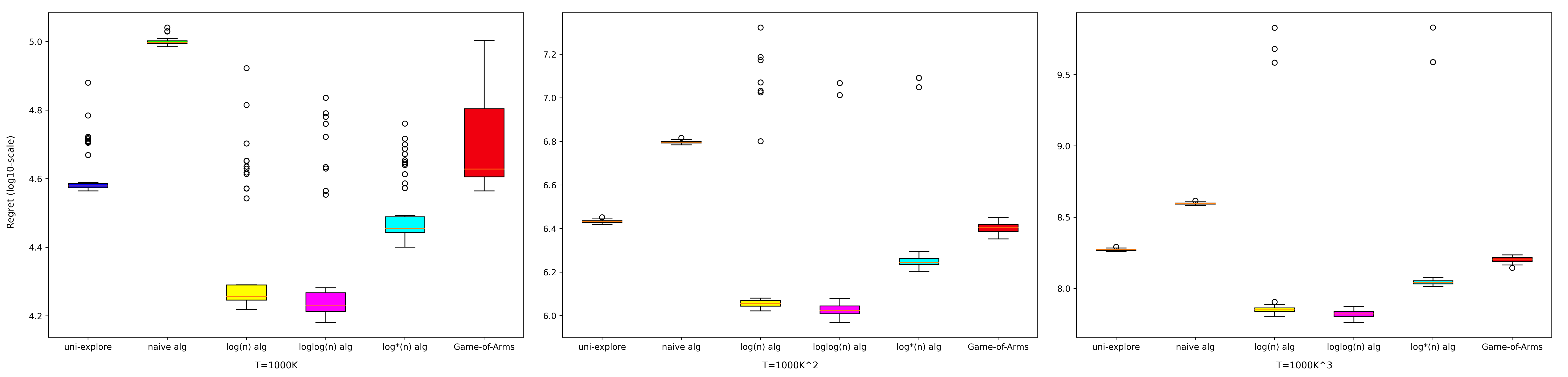}
\caption{\label{fig:K500standout}The regret error bars for $K=500$ standout reward setting of the stream.}
\end{figure}

\begin{figure}
\centering
\includegraphics[width=1.0\textwidth]{./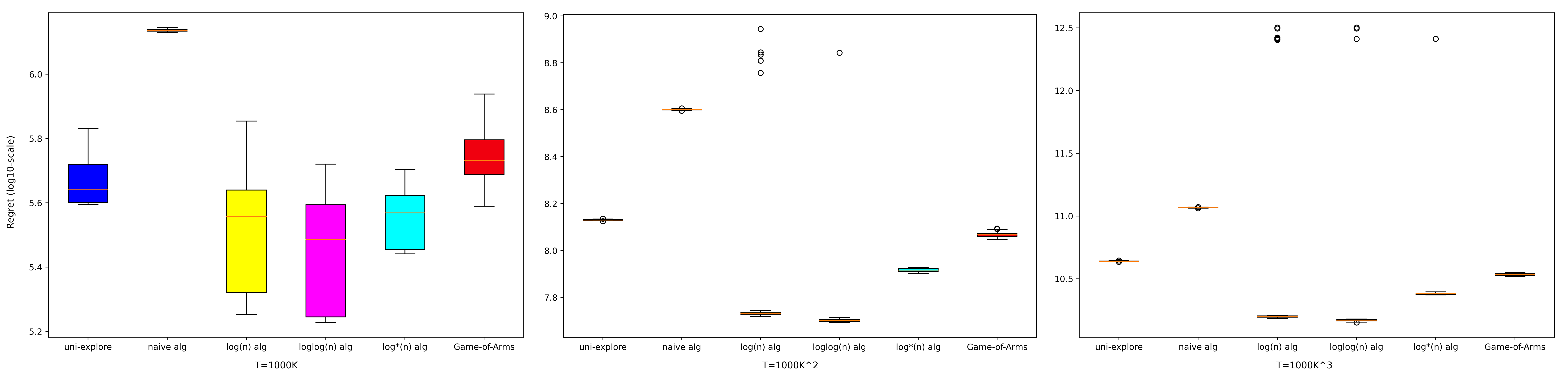}
\caption{\label{fig:K5000standout}The regret error bars for $K=5000$ standout reward setting of the stream.}
\end{figure}


\begin{figure}
\centering
\includegraphics[width=1.0\textwidth]{./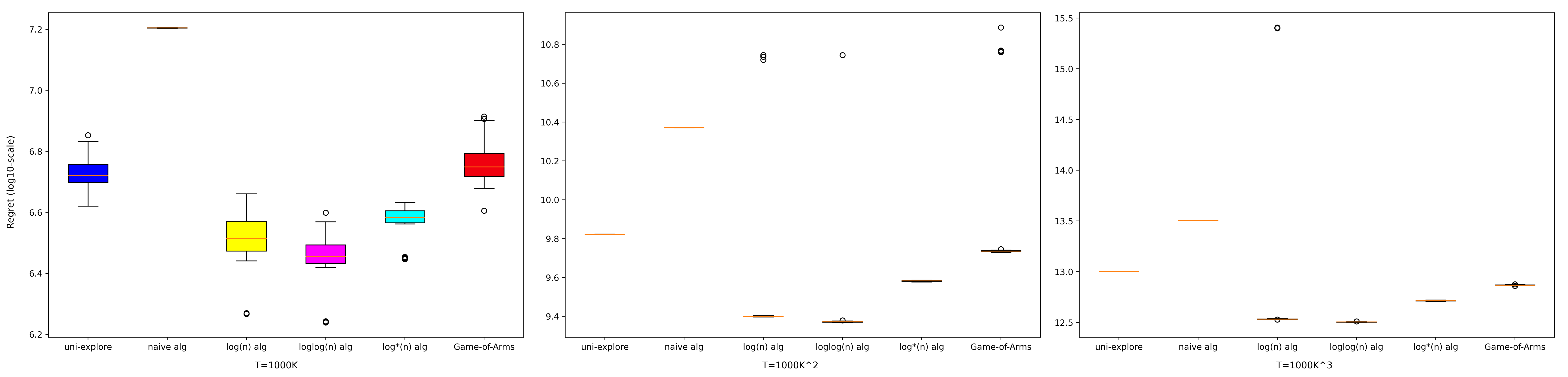}
\caption{\label{fig:K50000standout}The regret error bars for $K=50000$ standout reward setting of the stream.}
\end{figure}

\FloatBarrier

From the figures, it can be observed that the $\log(K)$-space algorithm gives the most unstable performances and the most extreme outliers, while other $\eps$-best algorithms are generally stable. In the uniform reward setting, the first and third quartiles of the rewards do not change drastically w.r.t. $T$, and there are generally less extreme outliers when $T$ is larger. This is because in the uniform reward setting, the differences between the $\eps$-best arms starts to matter, yet the cost of committing to a mediocre arm becomes lower. 
On the other hand, in the standout reward setting, when $T$ is smaller, the first and third quartiles of the reward distributions have larger ranges, but there are generally less extreme outliers. This matches our understanding of the behaviors of the algorithms: when $T$ is smaller, there a good chance that the algorithm terminates before finding an $\eps$-best arm; on the other hand, when $T$ becomes larger, committing to a `wrong' arm is much more expensive.

\section{Conclusion}
\label{sec:conclusion}
In this paper, we studied the tight lower and upper bounds for regret minimization for single-pass streaming multi-armed bandits. In particular, we first improved the regret lower bound for streaming algorithms with $o(K)$ memory from $\max\{\Omega(T^{2/3}), \Omega(K^{1/3}T^{2/3}/m^{7/3})\}$ to $\Omega(K^{1/3}T^{2/3})$, which is tight in both $T$ and $K$. We then proved that the $\Theta(K^{1/3}T^{2/3})$ regret, with high (constant) probability, can be achieved by adopting an $\eps$-best arm algorithm with $O(K/\eps^2)$ arm pulls, setting the parameter $\eps=(K/T)^{1/3}$, and committing to the returned arm. Furthermore, we showed that the simple exploration-and-commit strategy can achieve the \emph{expected} optimal regret of $\Omega(K^{1/3}T^{2/3})$ with a large family of streaming $\eps$-best arm algorithms, and the memory can be as small as $O(\logstar(K))$. Finally, we empirically tested the performances of the $\eps$-best arm-based algorithms on simulations of MABs streams, and we found that the proposed algorithms can significantly outperform the benchmark uniform exploration algorithm.

Our work completes the picture for regret minimization in single-pass streaming MABs with \emph{sublinear} arm memory. On the other hand, it also opens several directions of open problems for future exploration. The first question is whether the memory of arms can be further reduced to $O(1)$ or a single arm, as did in the pure exploration algorithms of \cite{assadi2020exploration} and \cite{JinH0X21}. Note that the single-arm memory algorithm in \cite{assadi2020exploration} may actually return a very bad arm, and it is unclear whether the algorithm in \cite{JinH0X21} has the smooth-failure property. Another open question is the multi-pass setting, where \cite{AgarwalKP22} proved tight bounds for regrets minimization with sublinear arm memory as a function of $T$, but the tight dependent on $K$ is still unclear. Finally, it will be interesting to see the application of our algorithms in real-world scenarios.  

\section*{Acknowledgement}
The author would like to thank Michael Saks and Sepehr Assadi of Rutgers University for very helpful discussions and anonymous ICML 2023 reviewers for their constructive comments. 

\bibliographystyle{alpha}
\bibliography{reference}

\appendix
\section{Technical Preliminaries}
\label{sec:tech-prelim}

We use the following standard variant of Chernoff-Hoeffding bound. 

\begin{proposition}[Chernoff-Hoeffding bound]\label{prop:chernoff}
	Let $X_1,\ldots,X_m$ be $m$ independent random variables with support in $[0,1]$. Define $X := \sum_{i=1}^{m} X_i$. Then, for every $t > 0$, 
	\begin{align*}
		\Pr\paren{\card{X - \expect{X}} > t} \leq 2 \cdot \exp\paren{-\frac{2t^2}{m}}. 
	\end{align*}
\end{proposition}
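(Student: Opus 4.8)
The plan is to prove this via the standard exponential-moment (Chernoff) method: reduce the two-sided bound to a one-sided tail, factorize the moment generating function using independence, and control each factor via Hoeffding's lemma. First I would establish the upper tail $\Pr(X - \expect{X} > t)$. For any $\lambda > 0$, Markov's inequality applied to the nonnegative random variable $\exp(\lambda(X - \expect{X}))$ gives $\Pr(X - \expect{X} > t) \leq e^{-\lambda t}\,\expect{\exp(\lambda(X - \expect{X}))}$. Since the $X_i$ are independent, the moment generating function factorizes as $\expect{\exp(\lambda(X - \expect{X}))} = \prod_{i=1}^{m} \expect{\exp(\lambda(X_i - \expect{X_i}))}$, so it suffices to bound each factor individually.

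The key step is Hoeffding's lemma. Writing $Y_i := X_i - \expect{X_i}$, each $Y_i$ is centered and supported in an interval of length $1$ (because $X_i \in [0,1]$), and I would show $\expect{\exp(\lambda Y_i)} \leq \exp(\lambda^2/8)$. The cleanest route is to set $\psi(\lambda) := \log \expect{\exp(\lambda Y_i)}$ and verify $\psi(0) = 0$ and $\psi'(0) = \expect{Y_i} = 0$, then bound $\psi''(\lambda) \leq 1/4$ uniformly in $\lambda$. The point is that $\psi''(\lambda)$ equals the variance of $Y_i$ under the exponentially tilted measure with density proportional to $\exp(\lambda Y_i)$; since that tilted law is still supported in an interval of length $1$, its variance is at most $1/4$ by the Popoviciu-type bound. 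A second-order Taylor expansion of $\psi$ about $0$ then yields $\psi(\lambda) \leq \lambda^2/8$, which is exactly the claimed factor bound.

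Combining the two steps gives $\Pr(X - \expect{X} > t) \leq \exp(-\lambda t + m\lambda^2/8)$ for every $\lambda > 0$. Optimizing the exponent $-\lambda t + m\lambda^2/8$ over $\lambda$ by taking $\lambda = 4t/m$ produces $\Pr(X - \expect{X} > t) \leq \exp(-2t^2/m)$. An identical argument applied to the summands $-X_i$, which are likewise centered and supported in an interval of length $1$, bounds the lower tail $\Pr(X - \expect{X} < -t)$ by the same quantity. A union bound over the two symmetric tails introduces the multiplicative factor of $2$ and gives the stated inequality for all $t > 0$.

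The main obstacle is Hoeffding's lemma, specifically the uniform estimate $\psi''(\lambda) \leq 1/4$ on the cumulant generating function; once that is in hand, the remaining steps (Markov, factorization, and optimizing over $\lambda$) are routine calculus. The delicate point is to use \emph{only} the support width of $1$ and the mean-zero centering, with no assumption on the distributions of the $X_i$ beyond boundedness, so that the bound holds for arbitrary $[0,1]$-valued independent summands.
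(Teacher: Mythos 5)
Your proof is correct: the exponential-moment method with Hoeffding's lemma (proved via the cumulant generating function, bounding $\psi''(\lambda)\le 1/4$ as the variance of the tilted law on an interval of length $1$), the optimal choice $\lambda = 4t/m$, and a union bound over the two symmetric tails give exactly the stated inequality $2\exp(-2t^2/m)$. The paper states this proposition without proof, as a standard variant of the Chernoff--Hoeffding bound, and your argument is precisely the canonical textbook derivation, so there is nothing to reconcile.
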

\noindent
A direct corollary of this bound that we use in our proofs is the following. 
\begin{lemma}\label{lem:arm-comp}
	Let $\arm_1$ and $\arm_2$ be two different arms with rewards $\mu_1$ and $\mu_2$. Suppose we sample each arm $4\cdot \frac{S}{\theta^2}$ times for some $S\geq 2$ to obtain 
	empirical rewards $\muhat_1$ and $\muhat_2$. Then, if ${\mu_1 - \mu_2} \geq c \cdot \theta$ for some integer $c\geq 1$, we have 
	\begin{align*}
		\Pr\paren{\muhat_1 \leq \muhat_2} \leq (\frac{1}{2})^{c^2-1} \cdot \exp\paren{-S}. 
	\end{align*}
\end{lemma}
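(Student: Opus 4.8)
The plan is to reduce the comparison event $\{\muhat_1 \le \muhat_2\}$ to two one-sided deviation events and then invoke the Chernoff--Hoeffding bound of \Cref{prop:chernoff}. First I would write each empirical reward as $\frac{1}{m}$ times a sum of $m := 4S/\theta^2$ independent samples supported in $[0,1]$, so that \Cref{prop:chernoff} applies to $m\muhat_1$ and $m\muhat_2$ directly. The key elementary observation is that whenever $\muhat_1 \le \muhat_2$, the two estimation errors must jointly absorb the entire reward gap: since $\mu_1 - \mu_2 \ge c\theta$, we have $(\mu_1 - \muhat_1) + (\muhat_2 - \mu_2) = (\mu_1 - \mu_2) - (\muhat_1 - \muhat_2) \ge c\theta$, where the last step uses $\muhat_1 - \muhat_2 \le 0$. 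Consequently at least one of the events $\{\mu_1 - \muhat_1 \ge c\theta/2\}$ or $\{\muhat_2 - \mu_2 \ge c\theta/2\}$ must occur, and a union bound gives $\Pr(\muhat_1 \le \muhat_2) \le \Pr(\mu_1 - \muhat_1 \ge c\theta/2) + \Pr(\muhat_2 - \mu_2 \ge c\theta/2)$.

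Next I would bound each one-sided tail. Applying \Cref{prop:chernoff} to the sum $m\muhat_j$ with deviation parameter $t = mc\theta/2$, the exponent becomes $\frac{2t^2}{m} = \frac{mc^2\theta^2}{2} = 2Sc^2$ after substituting $m = 4S/\theta^2$. Since \Cref{prop:chernoff} is stated two-sidedly, I simply bound each one-sided event by the full two-sided probability, paying only the leading factor of $2$; thus each term is at most $2\exp(-2Sc^2)$, and summing the two yields the intermediate bound $\Pr(\muhat_1 \le \muhat_2) \le 4\exp(-2Sc^2)$.

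It then remains to reconcile this (much stronger) bound with the stated target $(\tfrac{1}{2})^{c^2-1}\exp(-S)$. Writing $(\tfrac{1}{2})^{c^2-1} = 2\cdot 2^{-c^2}$ and taking logarithms, the desired inequality $4\exp(-2Sc^2) \le 2\cdot 2^{-c^2}\exp(-S)$ is equivalent to $(c^2+1)\ln 2 \le S(2c^2-1)$; the hypotheses $S \ge 2$ and $c \ge 1$ close it, since the right side is at least $4c^2-2$ while the left is at most $(c^2+1)\ln 2 < 0.7c^2 + 0.7$, and $4c^2 - 2 \ge 0.7c^2 + 0.7$ for every integer $c \ge 1$. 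I expect this final reconciliation to be the only genuinely delicate step: the natural Chernoff computation produces the stronger factor $\exp(-2Sc^2)$, and the work lies in checking that the assumption $S \ge 2$ suffices to simultaneously absorb the constant $4$ and the $2^{c^2}$ penalty into the comparatively weak form $(\tfrac{1}{2})^{c^2-1}\exp(-S)$ uniformly in $c$. Everything else—the splitting of the gap and the substitution into \Cref{prop:chernoff}—is routine.
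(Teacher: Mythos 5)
Your proof is correct and follows essentially the same route as the paper's: split the event $\{\muhat_1 \le \muhat_2\}$ into the two one-sided deviations of size $c\theta/2$, bound each by Chernoff--Hoeffding with exponent $2Sc^2$, union bound, and then absorb the constants into the weaker stated form $(\tfrac12)^{c^2-1}e^{-S}$ using $S\ge 2$. If anything your version is slightly tidier: you correctly phrase the decomposition as ``at least one deviation must occur'' (the paper's text says ``both are necessary'' before applying a union bound anyway), and you verify the final inequality uniformly in $c$ rather than splitting into the cases $c=1$ and $c\ge 2$.
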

\begin{proof}
	The proof is a standard application of the Chernoff bound \Cref{prop:chernoff}. For the empirical reward of $\muhat_{2}$ to be greater than $\muhat_{1}$, both of the low-probability following events are neceesary to happen: 
	\begin{align*}
		\Pr\paren{\muhat_1 \leq \mu_1 - c \cdot \theta/2} &\leq \exp\paren{-2\cdot (c \cdot \theta/2)^2 \cdot (4S/\theta^2)} \leq \exp\paren{-c^2\cdot S}; \\
		\Pr\paren{\muhat_2 \geq \mu_2 + c \cdot \theta/2} &\leq \exp\paren{-2\cdot (c \cdot \theta/2)^2 \cdot (4S/\theta^2)} = \exp\paren{-c^2\cdot S}.
	\end{align*}
	For $c=1$, a union bound on the events above gives us the desired bound. For $c\geq 2$, we have 
	\begin{align*}
	\exp\paren{-c^2\cdot S} & \leq \exp\paren{-c^2}\cdot  \exp\paren{-S} \tag{$sc^2\geq s+c^2$ for $S\geq 1$ and $c\geq 2$}\\
	& \leq (\frac{1}{2})^{c^2} \cdot \exp\paren{-S},
	\end{align*}
	and applying a union bound over the two cases gives us the desired statement.
\end{proof}

\section{The Single-pass Streaming Algorithm for the $\eps$-best Arm}
\label{app:eps-best-arm-alg}

For completeness, we include the algorithm of \cite{JinH0X21} that achieves the property described in \Cref{prop:eps-best-arm-alg}. \cite{assadi2020exploration} achieves a similar guarantee, but their algorithm requires a memory of 2 arms, and the sample complexity as an additive term proportional to $1/\eps^3$. The algorithm of \cite{JinH0X21} can be described as follows.

\begin{tbox}
\underline{\textbf{Parameter Set 2}}: 
\begin{align*}
&\set{s_\ell}_{\ell\geq 0}: r_{0}:=0,\qquad s_{1} := \frac{16}{\eps^2}\cdot \log(\frac{1}{\delta}) \qquad s_{\ell} := (2^{\ell}-2^{\ell-1}) \cdot s_{1} \tag{number of samples used in each level}\\
&\set{\tau_{j}}_{j\geq 1}: \tau_{j}:= \frac{32}{\eps^2}\cdot \log(\frac{j^2}{\delta}) \tag{the ``total budget'' threshold for comparing with the $j$-th arriving arm}\\
& p_{j} = \frac{1}{\log(j)+1} \tag{probability for setting the values of the gap parameter}
\end{align*}
\end{tbox}

\begin{tbox}
\textbf{The Single-pass $\eps$-best Arm Algorithm of \cite{JinH0X21}}
  
\smallskip

\begin{enumerate}
\item Maintain a stored arm $\arm^{o}$ and empirical reward $\muhat^{*}$ of the stored arm.
\item \label{line:start-epoch}After each update of $\arm^{o}$, start an \emph{epoch} as follows:
\begin{enumerate}[label=($\arabic*$)]
\item Let $\arm_j$ be the $j$-th arm after an epoch.
\item Sample $\alpha = \frac{\eps}{4}$ with probability $p_{j}$ and $\alpha=\frac{\eps}{2}$ with probability $1-p_{j}$.
\item Starting from level $\ell=1$: 
    \begin{enumerate}
    \item\label{linesample} Sample $\arm_{j}$ for $s_{\ell}$ times and get $\muhat_{\arm_{j}}$. 
    \item If $\muhat_{\arm_{j}}<\mu_{\ell}^{*}+\alpha$, drop $\arm_{j}$;
    \item Otherwise, if $2^{\ell}\cdot s_{1}>\tau_{j}$, replace $\arm^{o}$ with $\arm_{j}$ and set $\mu_{\ell}^{*}=\muhat_{\arm_{j}}$, and start a new epoch from Line~\ref{line:start-epoch}.
    \item Otherwise, send  $\arm_{j}$ to the next level by calling Line~\ref{linesample} with $(\ell = \ell+1)$.
    \end{enumerate}
\item Output $\arm^{o}$ by the end of the stream.
\end{enumerate}
\end{enumerate}
\end{tbox}

It is easy to observe that the algorithm only uses a memory of a single arm (in addition to the one in the buffer). \cite{JinH0X21} proved that with high probability, the algorithm uses at most $O(\frac{K}{\eps^2}\log(\frac{1}{\delta}))$ arm pulls and returns an $\eps$-best arm.

\section{Details for Additional Algorithms}
\label{app:algorithm}
We provide the descriptions for the streaming implementation of the uniform exploration algorithm and the additional algorithms we mentioned in \Cref{rmk:add-algs}. 

\subsection{Uniform Sampling Algorithm under the Streaming Setting}
\label{subsec:uniform-exploration}
We first give the streaming implementation of simple uniform exploration algorithm. The algorithm is to simply pull each arm $N$ times, pick the arm with the highest empirical reward, and commit to the returned arm for the rest of the trials (if any). As such, the streaming adaptation is extremely straightforward:
\begin{tbox}
\textbf{\underline{Streaming Uniform Exploration}} -- parameters $N$: number of arm pulls for each arm
\begin{enumerate}
\item Maintain space of a single extra arm and a best mean reward $\muhat^*$ with initial value $0$.
\item For each arriving $\arm_{i}$ pull $N$ times, record the empirical reward $\muhat_{i}$.
\item If $\muhat_{i}>\muhat^*$, discard the stored arm and let the $\arm_{i}$ be the stored arm; update $\muhat^* = \muhat_{i}$.
\item Otherwise, discard $\muhat_{i}$ and keep the stored arm unchanged.
\item Return the stored arm by the end of the stream.
\end{enumerate}
\end{tbox}

It is easy to see that we only need to main a single arm (in addition to the arriving buffer) during the stream. Furthermore, it is folklore that if we set $N=O((\frac{T}{K})^{2/3}\log^{1/3}(T))$, the expected regret is attained at $O(K^{1/3}T^{2/3}\log^{1/3}(T))$.

\subsection{The $\log(K)$- and $\log\log(K)$-memory streaming algorithms}
\label{subsec:add-algs}
We now introduce the algorithms used with $\log(K)$- and $\log\log(K)$-memory, which are implemented in \Cref{sec:simulation} and at times offer more competitive performances than the $\logstar(K)$-memory algorithm. We opt to include their descriptions since the $\eps$-best algorithms were not described in \cite{assadi2020exploration}. We however omit the proofs for the algorithms to achieve the optimal regret since it is very similar to \Cref{lem:smooth-guarantee-logstar}, and leave it as an exercise for keen readers. For the $O(1)$-memory $\algarm$ algorithm and the single-arm memory algorithm, we refer the reader to the respective work \cite{assadi2020exploration,JinH0X21}. 

The algorithm with $O(\log(K))$ can be described as follows.
\begin{tbox}
	\textbf{\underline{An algorithm with ${O(\log{K})}$-arm space and $O(K^{1/3}T^{2/3})$ expected regret:}}
	
	\smallskip
	
	Input parameter: $K$ number of arms; $T$ number of trials
	
	\smallskip
	
	Parameters:
	\begin{align*}
	& \eps = (\frac{K}{T})^{1/3}\\
	& \eps_{\ell} = \frac{1}{10} \cdot \frac{\eps}{2^{\ell-1}}\\
	& \set{s_\ell}_{\ell \geq 1}: \quad s_{\ell} = \frac{4}{\eps^2_{\ell}} \cdot \paren{\ln{(1/\delta)}+3^{\ell}}.
	\end{align*}
	
	\smallskip
	
	Maintain: 
	\begin{itemize}
	\item Buckets: $B_{1}$, $B_{2}$, ..., $B_{t}$, each of size $4$ for $t :=\ceil{\log_4{(K)}}$. 
	\end{itemize}
	
	\smallskip
	
	Algorithmic procedure:
	\begin{itemize}
	\item For each arriving $\arm_i$ in the stream do:
	\begin{enumerate}[label=($\arabic*$)]
		\item Add $\arm_{i}$ to bucket $B_{1}$. 
		\item\label{line:challenge-send} If any bucket $B_{\ell}$ is full: 
		\begin{enumerate}
			\item We sample each arm in $B_{\ell}$ for $s_{\ell}$ times;
			\item Send the $\arm_{\ell}^{*}$ with the highest empirical reward to $B_{\ell+1}$, an clear the bucket $B_{\ell}$;
		\end{enumerate}
	\end{enumerate}
	\item At the end of the stream, pick the best arm of each bucket with $s_{\ell}$ times, repeat line~\ref{line:challenge-send} regardless of whether the bucket is full.
	\item Pick $\arm_{t}^{*}$ of bucket $B_{t}$ as the selected arm, and commit the rest of the trials to this arm.
	\end{itemize}
\end{tbox}

This algorithm is very similar to the $O(\log(K))$-arm algorithm for best-arm identification in \cite{assadi2020exploration}; in fact, the only technical difference between this algorithm and the original is the usage of exponentially decreasing $\eps$ across levels. We shall note that this is in contrast with the $O(\log\log(K))$-arm and $O(\logstar(K))$-arm algorithms, in which the modification to challenge a \emph{fixed} reward threshold plays an important role. The algorithm with $O(\log\log(K))$-arm space can be shown as follows.

\begin{tbox}
	\textbf{\underline{An algorithm with ${O(\log\log{K})}$-arm space and $O(K^{1/3}T^{2/3})$ expected regret:}}
	
	\smallskip
	
	Input parameter: $K$ number of arms; $T$ number of trials
	
	\smallskip
	
	Parameters:
	\begin{align*}
	& \eps = (\frac{K}{T})^{1/3}\\
	& \eps_{\ell} = \frac{1}{10} \cdot \frac{\eps}{2^{\ell-1}}\\
	& \set{s_\ell}_{\ell \geq 1}: \quad s_{\ell} = \frac{4}{\eps^2_{\ell}} \cdot \paren{\ln{(1/\delta)}+3^{\ell}}; 
	& s_T := \frac{4}{\eps^2} \cdot \paren{\ln{(1/\delta)}+\ln{(K)}}.
	\end{align*}
	
	\smallskip
	
	Maintain: 
	\begin{itemize}
	\item Buckets: $B_{1}$, $B_{2}$, ..., $B_{t-1}$, each of size $4$ for $t :=\ceil{\log_{4}\ln{(K)}}$; $B_{t}$ is of size $1$.
	\item Best-reward on level $t$: $\mu^*_{t}$ initialized to $0$.
	\end{itemize}
	
	\smallskip
	
	Algorithmic procedure:
	\begin{itemize}
	\item For each arriving $\arm_i$ in the stream do:
	\begin{enumerate}[label=($\arabic*$)]
		\item Add $\arm_{i}$ to bucket $B_{1}$. 
		\item For any level $\ell<t$: $B_{\ell}$ is full: 
		\begin{enumerate}
			\item We sample each arm in $B_{\ell}$ for $s_{\ell}$ times;
			\item Send the $\arm_{\ell}^{*}$ with the highest empirical reward to $B_{\ell+1}$, an clear the bucket $B_{\ell}$;
		\end{enumerate}
		\item For level $t$:
		\begin{enumerate}
			\item Sample the most recent arm that reaches level $t$ $s_{t}$ times, reward the empirical reward $\tilde{\mu}$;
			\item If $\tilde{\mu}>\mu^*_{t}$, let the most recent arm be stored, discard the stored arm at level $t$, and update $\mu^*_{t} \leftarrow \tilde{\mu}$;
		\end{enumerate}
	\end{enumerate}
	\item At the end of the stream, pick the best arm of each bucket with $s_{\ell}$ times, and send the best to higher levels even regardless of whether the bucket is full.
	\item Pick the single $\arm_{t}^{*}$ of bucket $B_{t}$ as the selected arm, and commit the rest of the trials to this arm.
	\end{itemize}
\end{tbox}

Note that compared to the $O(\log\log(K))$-arm memory algorithm for best-identification in \cite{assadi2020exploration}, the small yet subtle difference here is that allow more `slack' for the stored arm at level $t$ by not repetitively pulling it and using the fixed reward threshold instead.

\end{document}